\documentclass{article} 
\usepackage{iclr2026_conference,times}

\usepackage[T1]{fontenc}    
\usepackage{hyperref}       
\usepackage{url}            
\usepackage{booktabs}       
\usepackage{amsfonts}       
\usepackage{nicefrac}       
\usepackage{microtype}      
\usepackage{xcolor}         
\usepackage{subcaption}     

\usepackage{amssymb}
\usepackage{amsmath}
\usepackage{mathtools}
\usepackage{amsthm}
\usepackage{thm-restate}

\newtheorem*{assumption*}{Assumption}
\newtheorem{theorem}{Theorem}
\newtheorem{lemma}{Lemma}
\newtheorem{corollary}{Corollary}

\newtheorem{definition}{Definition}

\usepackage{scrwfile}
\TOCclone[\appendixname]{toc}{atoc}
\newcommand\StartAppendixEntries{}
\AfterTOCHead[toc]{%
  \renewcommand\StartAppendixEntries{\value{tocdepth}=-10000\relax}%
}
\AfterTOCHead[atoc]{%
  \edef\maintocdepth{\the\value{tocdepth}}%
  \value{tocdepth}=-10000\relax%
  \renewcommand\StartAppendixEntries{\value{tocdepth}=\maintocdepth\relax}%
}
\newcommand*\appendixwithtoc{%
  \appendix
  \renewcommand{\baselinestretch}{1.3}\normalsize  
  \addtocontents{toc}{\protect\StartAppendixEntries}
  \listofatoc
  \renewcommand{\baselinestretch}{1.0}\normalsize 
}

\hypersetup{
  colorlinks = true,   
  citecolor  = blue,   
  linkcolor  = blue,  
  urlcolor   = blue    
}

\title{Towards Sustainable Investment Policies Informed by Opponent Shaping}

\author{Juan Agustin Duque$^{*}$, Razvan Ciuca$^{*}$, Ayoub Echchahed$^{\ddagger}$,
Hugo Larochelle$^{\dagger}$, Aaron Courville$^{\dagger}$ \\
\vspace{0em}
University of Montreal and MILA\\
\vspace{0.25em}
\texttt{firstname.lastname}@umontreal.ca\\
\vspace{0.25em}
\small $^{*}$Equal contribution \quad
$^{\dagger}$Equal supervision \quad
$^{\ddagger}$Core contributor
}

\iclrfinalcopy

\begin{document}

\maketitle

\begin{abstract}
  Addressing climate change requires global coordination, yet rational economic actors often prioritize immediate gains over collective welfare, resulting in social dilemmas. InvestESG is a recently proposed multi-agent simulation that captures the dynamic interplay between investors and companies under climate risk. We provide a formal characterization of the conditions under which InvestESG exhibits an intertemporal social dilemma, deriving theoretical thresholds at which individual incentives diverge from collective welfare. Building on this, we apply Advantage Alignment, a scalable opponent shaping algorithm shown to be effective in general-sum games, to influence agent learning in InvestESG. We offer theoretical insights into why Advantage Alignment systematically favors socially beneficial equilibria by biasing learning dynamics toward cooperative outcomes. Our results demonstrate that strategically shaping the learning processes of economic agents can result in better outcomes that could inform policy mechanisms to better align market incentives with long-term sustainability goals.
\end{abstract}

\section{Introduction}

Climate change is one of the defining challenges of our time, which demands immediate, coordinated global action to mitigate catastrophic environmental and economic consequences \citep{IPCC_AR6_WGII_2022}. Despite international agreements such as the Paris Agreement, nations continually grapple with aligning individual economic incentives with collective environmental goals. This misalignment can be effectively modeled as a \textit{social dilemma}: a scenario in which players, acting independently and rationally according to their own interest, produce a suboptimal collective outcome \citep{rapoport1965prisoner}. In climate negotiations, nations often prefer short-term economic benefits over the collective welfare of aggressive mitigation strategies, leading to Pareto-suboptimal equilibria.

Previous advances in reinforcement learning (RL) have demonstrated remarkable capabilities in optimizing agent behaviors in various complex tasks \citep{mnih2013playing, schulman2017proximal}. However, when confronted with general-sum games such as climate change negotiations, traditional RL methods consistently converge to selfish strategies that maximize immediate individual rewards but result in globally suboptimal outcomes \citep{sandholm1996multiagent, foerster2018learning}. These greedy equilibria do not realize the potential gains achievable through cooperation, underscoring the critical need for methods that inherently incentivize collaborative behavior among rational players.

To address this fundamental limitation, \textit{opponent shaping}~\citep{foerster2018learning} has emerged as a promising approach, allowing players to strategically influence the learning dynamics of other participants. Advantage Alignment, a recent advancement within opponent shaping algorithms \citep{duque2025advantagealignmentalgorithms}, operates by aligning the \textit{advantages} of interacting players. Unlike earlier opponent shaping methods that rely on computationally expensive imagined parameter updates \citep{foerster2018learning, willi2022cola}, Advantage Alignment modifies policy gradients directly, promoting actions that benefit both individual and collective outcomes efficiently.

Conventional MARL methods (IPPO \citep{yu2022surprisingeffectivenessppocooperative}) optimize myopic returns, while recent opponent-shaping approaches (LOLA \citep{foerster2018learning}, M-FOS \citep{lu2022modelfree}, BRS \cite{aghajohari2024best}) scale poorly beyond simple games or are limited to discrete action spaces (LOQA \citep{aghajohari2024loqa}). In this paper, we apply Advantage Alignment in the context of sustainable investment policies in the InvestESG environment \citep{hou2025investesgmultiagentreinforcementlearning}, a realistic simulation of corporate and investor interactions under climate risk. Specifically, our contributions are as follows.
\begin{itemize}
\item We formally prove for a simplified InvestESG that the parameter $\alpha$ (the climate responsiveness to mitigation) is critical for making the game a social dilemma in the stochastic setting (see Appendix \ref{app:EXPERIMENTS}), and empirically show that the full game becomes a dilemma with $\alpha=70$.
\item We adapt Advantage Alignment to handle the complex  dynamics characteristic of the original InvestESG simulation, without any simplifying assumptions, demonstrating its scalability and effectiveness compared to IPPO and MAPPO baselines \citep{yu2022surprisingeffectivenessppocooperative}.
\item We provide a theoretical argument for why Advantage Alignment finds better equilibria than other baselines (including PPO with incentives and PPO summing the rewards).
\item We provide insights into the policies discovered by Advantage Alignment, which contribute to mitigation more effectively than other algorithms.
\end{itemize}

\section{Background}

\subsection{Markov Games}
We consider the formalism of $n$-player, fully observable, general-sum Markov Games \citep{shapley1953stochastic} which are represented by a tuple: $\mathcal{M} = (N, \mathcal{S}, \mathcal{A}, P, \mathcal{R}, \gamma)$. Here, the state space is denoted $\mathcal{S}$. The joint action space for all players in the game is written as $\mathcal{A} := \mathcal{A}^1 \times \hdots \times \mathcal{A}^n$. The transition function, $P:\mathcal{S} \times \mathcal{A} \to \Delta(\mathcal{S})$, maps from every state and joint action to a distribution over states. The reward structure is given by $\mathcal{R} = \{r^1, \hdots, r^n\}$, where each $r^i: \mathcal{S} \times \mathcal{A} \to \mathbb{R}$ specifies the reward received by player $i$. The discount factor, $\gamma \in [0,1]$, determines the relative importance of future rewards.

\subsection{Reinforcement Learning}
Consider $n$ players interacting in an environment with policies $\pi^i$, with $i \in [1, \ldots, n]$, each parameterized by $\theta^i$. Here, the result for each player is determined not only by their own actions but also by those of other players \citep{VonNeumann1944}. Following the convention of \cite{agarwal2021reinforcement}, we can write the probability of a trajectory $\tau$ under the policies of all players as follows: $\text{Pr}_{\mu}^{\pi^i, \pi^{-i}}(\tau) = \mu(s_0) \cdot \prod_{i=1}^n \pi^i(a^i_0 | s_0)  \cdot P(s_1 | s_0, \mathbf{a}_0)\ldots$
Where $\mu$ denotes the initial distribution over states, and $P(\cdot | s, \mathbf{a})$ is the transition distribution over states, which depends on the current state, $s$, and the joint action profile of all players, $\mathbf{a}$. In reinforcement learning the goal is to maximize the expected sum of future discounted reward, i.e. the value function:
\begin{equation}
    \label{eq:VALUE}
    V^{i}(\mu) \equiv \mathbb{E}_{\tau \sim \text{Pr}_{\mu}^{\pi^i, \pi^{-i}}}\left[ R^i(\tau)\right] =\mathbb{E}_{\tau \sim \text{Pr}_{\mu}^{\pi^i, \pi^{-i}}} \left[\sum_{t=0}^\infty \gamma^t r^i(s_t, \mathbf{a}_t)\right] .
\end{equation}
We also define the action-value function, also known as the $Q$-value, to define the advantage function:
\begin{equation}
    \label{eq:Q_VALUE}
    Q^{i}(s, \mathbf{a}) = \mathbb{E}_{\tau \sim \text{Pr}_{\mu}^{\pi^i, \pi^{-i}}}\left[ R^i(\tau)| s_0=s, \mathbf{a}_0=\mathbf{a}\right], \quad A^i(s, \mathbf{a})=Q^{i}(s, \mathbf{a}) - V^i(s).
\end{equation}
To optimize the value function, policy gradient methods perform gradient ascent on the policy parameters following a REINFORCE estimator \citep{williams1992simple} of the form:
\begin{equation}
    \label{eq:REINFORCE}
    \nabla_{\theta^i}V^{i}(\mu) = \mathbb{E}_{\tau \sim \text{Pr}_{\mu}^{\pi^i, \pi^{-i}}} \left[\sum_{t=0}^\infty \gamma^t A^i(s_t, \mathbf{a}_t) \nabla_{\theta^i} \log \pi^i (a^i_t | s_t)\right].
\end{equation}
The most popular variant of policy gradient algorithms, Proximal Policy Optimization (PPO) \citep{schulman2017proximal}, makes conservative updates to the policy using a clipping heuristic:
\begin{equation}
    \label{eq:PAA}
    \mathbb{E}_{\tau \sim \text{Pr}_{\mu}^{\pi^i, \pi^{-i}}} \left[\min \left\{\frac{\pi^i(a^i_t|s_t;\theta^i)}{\pi^i(a^i_t|s_t;\theta^i_{l})}A^{i}(s_t, \mathbf{a}_t), \;\text{clip}\left(\frac{\pi^i(a^i_t|s_t;\theta^i)}{\pi^i(a^i_t|s_t;\theta^i_{l})};1-\epsilon, 1+\epsilon \right)A^i(s_t, \mathbf{a}_t) \right \}\right], \\
\end{equation}
Where $\theta^i_l$ denotes the policy parameters of player $i$ after $l$ gradient updates. Intuitively, PPO attempts to keep the updated policy close to the original policy in terms of total variation distance. 

\subsection{Social Dilemmas}

Social dilemmas, as introduced by \citet{rapoport1965prisoner}, describe multi-player scenarios in which individual rational play leads to an outcome with less utility than that which would have occurred if all players had cooperated. Formally, consider a normal-form game with $n$ players, where each player $i$ selects an action $a^i$ from their action space $\mathcal{A}^i$, and receives utility $r^i(a^1, \dots, a^n)$. Let $\mathbf{a}_{\text{C}} = (a_{\text{C}}^1, \dots, a_{\text{C}}^n)$ denote a joint cooperative strategy profile that maximizes the total social welfare.
\begin{definition}[Nash Equilibrium~\citep{nash1950equilibrium}]
    \label{def:NASH}
    Consider an $n$-player game where each player $i \in \{1,\dots,n\}$ has an action space $\mathcal{A}^i$ and a payoff function $r^i : \mathcal{A}^1 \times \dots \times \mathcal{A}^n \rightarrow \mathbb{R}$. A strategy profile $\mathbf{a}_N = (a_N^1,\dots,a_N^n)$ is a \textit{Nash equilibrium} if for all players $i$:
    \begin{equation}
        r^i(a_N^i,\mathbf{a}_N^{-i}) \geq r^i(a^i,\mathbf{a}_N^{-i}), \quad \forall a^i \in \mathcal{A}^i.
    \end{equation}
    Here, $\mathbf{a}_N^{-i}$ denotes the strategies chosen by all players other than player $i$.
\end{definition}
\begin{definition}[Social Dilemma]
    \label{def:SOCIAL_DILEMMA}
    Let $\mathbf{a}_N$ be a Nash equilibrium of the game. A game is said to be a \textit{social dilemma} if it satisfies both of the following conditions:
    
    \textnormal{1. Collective Inefficiency:} The socially optimal outcome yields strictly higher total utility than the Nash equilibrium:
    \begin{equation}
        \label{cond:COLLECTIVE_INEFFICIENCY}
        \sum_{i=1}^n r^i(\mathbf{a}_{\text{C}}) > \sum_{i=1}^n r^i(\mathbf{a}_N).
    \end{equation}
    \textnormal{2. Individual Incentive to Defect:} For each player $i$, there exists some joint strategy $\mathbf{a}^{-i}$ such that deviating from the cooperative strategy is individually beneficial:
    \begin{equation}
       r^i(a^i_{\text{D}}, \mathbf{a}^{-i}_{\text{C}}) > r^i(a_{\text{C}}^i, \mathbf{a}^{-i}_{\text{C}}),
    \end{equation}
    where $a^i_{\text{D}}$ is a defection action and $\mathbf{a}^{-i}_{\text{C}}$ denotes the cooperative actions of all other players.

\end{definition}
 
These two conditions capture the tension at the heart of social dilemmas: cooperation is globally optimal, but locally unstable due to unilateral incentives to defect \citep{DBLP:journals/corr/CapraroH16}. However, this definition makes strong assumptions that may not hold in the real world (e.g. uniqueness of Nash). For analyzing social dilemmas in Markov games, we define the \emph{price of anarchy}~\citep{nisan2007algorithmic}.

\begin{definition}[Price of Anarchy in Markov Games]
\label{def:PoA}
Let $\mathcal{M}$ be an $n$-player Markov game.
Denote by $\Pi$ the set of joint policies and by
$\mathcal{N}\subseteq\Pi$ the subset that are Nash
(Definition~\ref{def:NASH}).
For a joint policy $\pi\!\in\!\Pi$ let
$
\displaystyle
\mathcal{W}(\pi;\mu):=\sum_{i} V^{i}_{\pi}(\mu)
$
be the social welfare function.
The price of anarchy is
\begin{equation}
\mathcal{P}_{a}\;=\;
\frac{\displaystyle\max_{\pi\in\Pi}\;\mathcal{W}(\pi;\mu)}
{\displaystyle\min_{\pi\in\mathcal{N}}\;\mathcal{W}(\pi;\mu)}.
\end{equation}
\end{definition}
A Markov game exhibits a social dilemma whenever $\mathcal{P}_{a}>1$,
i.e.\ the worst equilibrium achieves strictly lower total return
than some feasible, $\mathcal{W}$-maximizing, joint policy. Intuitively, the price of anarchy is the ratio between the welfare of coordinated and individual rational play.

\subsection{Opponent Shaping}
Opponent shaping, as introduced by \cite{foerster2018learning}, changes the paradigm of reinforcement learning: instead of assuming stationarity of the environment, opponent shaping algorithms assume that other players are learning agents. By making assumptions about the learning algorithms \citep{foerster2018learning} or modeling the problem as a meta-game \citep{lu2022modelfree}, opponent shaping algorithms attempt to influence the learning dynamics of other players. \cite{duque2025advantagealignmentalgorithms}, proved that controlling other players via the Q-values is equivalent to doing policy gradient with the following modified advantage:
\begin{equation}
    \label{eq:INTEGRATED_ADVANTAGE}
    A^{*, i}(s_t, \mathbf{a}_t) = \left(A^i(s_t, \mathbf{a}_t) + \beta \gamma \cdot \sum_{j \neq i}\left(\sum_{k<t} \gamma^{t-k} A^{i}(s_k,\mathbf{a}_k) \right)A^j(s_t, \mathbf{a}_t)\right).
\end{equation}
This Advantage Alignment formula can be plugged into the PPO formulation for a scalable Opponent Shaping algorithm (Proximal Advantage Alignment) that works in complicated multi-agent settings with high dimensional state representations, continuous action spaces, partial observability and multiple (more than 2) agents.

\section{InvestESG}

InvestESG \citep{hou2025investesgmultiagentreinforcementlearning} is a multi-agent reinforcement learning (MARL) environment designed to study the long-term impact of corporate climate investments and ESG (Environmental, Social, and Governance) disclosure mandates in an intertemporal economy under climate risk. The environment simulates interactions between company agents (who allocate capital across mitigation, resilience, and greenwashing strategies) and investor agents (who reallocate capital across companies based on profitability and ESG scores). Climate risk evolves over a 100-year horizon, shaped by cumulative mitigation efforts, and materializes through climate events that inflict economic losses on companies. By default, InvestESG uses $5$ companies, $3$ investors and doesn't allow for resilience or greenwashing strategies, although these parameters can be modified. The main limitations of this simulator, as a result of this parameterization,  are clearly stated in the InvestESG paper \citep{hou2025investesgmultiagentreinforcementlearning}.

\subsection{Economic and Environmental Dynamics}

For notational convenience, we use the index $i \in [1, \ldots, M]$ to refer to companies and index $j \in [1, \dots, N]$ to refer to investors. In InvestESG, each investor's state is represented by their cash level $C_t^{j}$ and investment portfolio $\mathbf{S}_t^{j} = (H_{t,1}^{j}, \dots, H_{t,M}^{j})$, where $H_{t,i}^{j}$ denotes the investment that investor $j$ holds in company $i$ at timestep $t$. At each timestep, investors pick a binary portfolio vector $\mathbf{a}_t^j = [a^j_{t,1}, \ldots, a^j_{t,M}]$ representing which companies to invest in and companies choose to mitigate a percentage of its capital, $u_t^{i}\!\in[0,\bar u]$. The interim capital of each company at time $t+1$ is given by:
\begin{equation}
    \label{eq:INTERIM_CAPITAL}
    K_{t+1, \text{interim}}^{i} = K_{t}^{i} - \underbrace{\sum_{j=1}^N H_{t,i}^{j}}_{\text{cash returned to investors}} + \underbrace{\sum_{j=1}^N a_{t,i}^{j} \frac{\mathcal{K}_t^{j}}{||\mathbf{a}_t^j||_1}}_{\text{new equity investments}},
\end{equation}
Where $K^i_t$ denotes the capital of company $i$ and $\mathcal{K}^j_t$ the capital of investor $j$ at time $t$. The reward at each timestep for company $i$ is given by the profit margin, \( r_t^{i} = K_{t+1}^{_i} - K^{i}_{t+1,\text{interim}} \). Climate dynamics, namely the risk that a particular type of harmful climate event occurs $P_t^{e}$ (where $e$ can be either an extreme heat, $h$, heavy precipitation, $p$, or drought, $d$, event) grows linearly w.r.t. initial event probability $P_0^{e}$ following:
\begin{equation}
    \label{eq:CLIMATE_RISK}
    P^e_t \;=\; \frac{\mu_e t}{1+\lambda_e U_t}\;+\;P^e_0,
    \qquad
    \lambda_e=\alpha\times\tilde\lambda_e,
\end{equation}
Where $\mu_e$, $\hat{\lambda}_e$, are calibration constants set to ensure that an annual mitigation investment of 2.3 trillion is required to achieve IPCC’s $1.5^{\text{◦}}$C scenario \citep{hou2025investesgmultiagentreinforcementlearning}. We add an additional calibration constant, $\alpha$, that scales the $\hat{\lambda}$ parameters in order to make InvestESG a social dilemma (for details see section \ref{sec:INVESTESG_SOCIAL_DILEMMA}). The total risk is $P_t = 1 - (1 - P^h_t)(1 - P^p_t)(1 - P^d_t)$,  and the cumulative mitigation $U_t$ is:
\begin{equation}
    U_{t} = U_{t-1} + \sum_{i=1}^M u^i_{t} K^i_{t+1, \text{interim}}.
\end{equation}
Finally, the capital of company $i$ is updated as follows:
\begin{equation}
    K^i_{t+1} = (1+\rho^i_t)K^i_{t+1, \text{interim}}, \quad \rho^i_t = (1-u^i_t)(1+\gamma)(1 - X_{t}L_i) - 1.
\end{equation}
Where, $\rho^i_t$ is the profit margin of company $i$. Each type of harmful climate event (extreme heat, heavy precipitation or drought) is modeled independently by $\text{Bernoulli}(P^e_{t})$, where $X_{t} \in \{0, 1, 2, 3\}$ is the total number of climate events realized at time $t$. $L_i$ above denotes the loss coefficient of company $i$ and $\gamma$ (which differs from the discount factor) is the market performance baseline.

\section{A Formal Analysis of InvestESG as a Social Dilemma}
\label{sec:INVESTESG_SOCIAL_DILEMMA}
\begin{figure}[t]
  \centering
  \begin{subfigure}{1\linewidth}
    \centering
    \includegraphics[width=\linewidth]{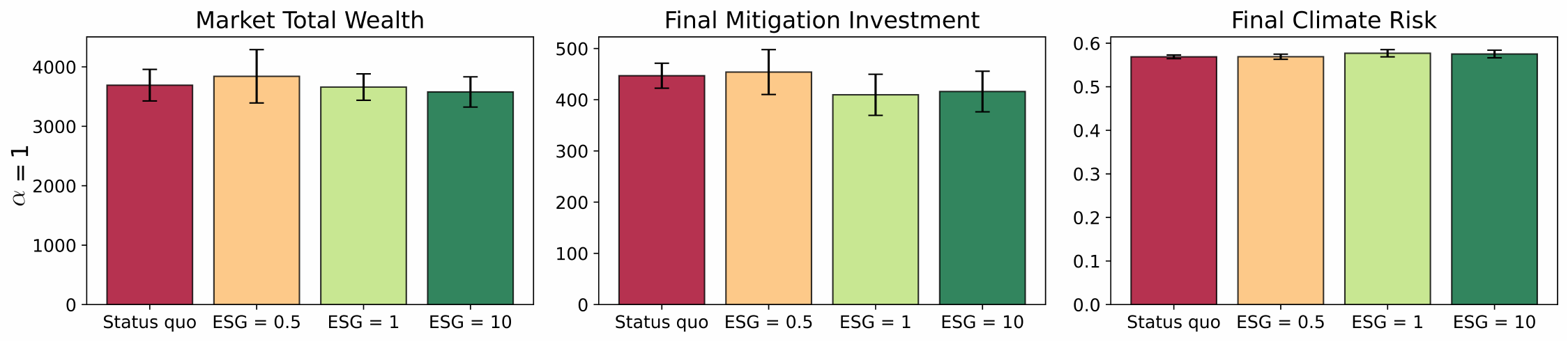}
  \end{subfigure}

  \vspace{2mm}

  \begin{subfigure}{1\linewidth}
    \centering
    \includegraphics[width=\linewidth]{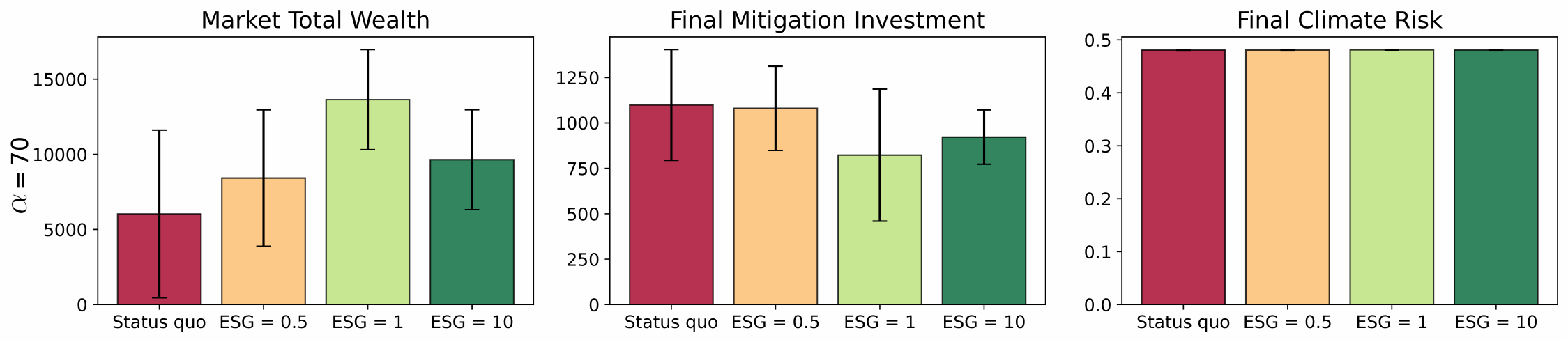}
  \end{subfigure}
  
  \caption{\small Comparison of final environment metrics at different  $\alpha$ (introduced in equation \ref{eq:CLIMATE_RISK}) values, for 10 seeds of PPO agents: With the default scaling ($\alpha=1$), the differences in market total wealth and final mitigation amount are negligible between agents with different ESG incentives. \emph{Status Quo} refers to PPO agents trained without incentives. Increasing the scaling factor ($\alpha = 70$) results in higher market total wealth for ESG-conscious investors, and higher price of anarchy. The whiskers indicate a 1-standard deviation confidence interval. It should be noted that the ESG score is a multiplicative parameter, ESG $=10$ represents an immense prosocial reward: the investors in that scenario effectively do not care at all about their own profits. These experiments were run with the full complexity allowed by InvestESG, without simplifications.}
  \label{fig:ESG_SCORE_COMPARISSONS}
\end{figure}
The InvestESG environment was originally introduced as a scenario capturing essential features of social dilemmas in sustainable investing. We study InvestESG in the stochastic setting where the environmental dynamics are not seeded and observe that the empirical price of anarchy (Definition \ref{def:PoA}) is close to $1$ (Figure \ref{fig:ESG_SCORE_COMPARISSONS}). The observed equilibrium does not change even when adding extremely strong ESG incentives such that investors effectively only care about ESG (the ESG=10 case in the figure). In a social dilemma, we would expect to see that adding prosocial incentives changes the final behavior of agents, yet this is not what we see empirically. 
\\\\
In this section, we derive conditions under which a simplified InvestESG explicitly meets the definition of a social dilemma, clarifying  theoretically when this environment manifests the characteristic tensions between individual rationality and collective welfare. We then validate our theoretical results empirically in the full version of InvestESG without simplifications (See Figure \ref{fig:ESG_SCORE_COMPARISSONS}). We find that the effectiveness of mitigation actions $\lambda$, namely how much each dollar invested in mitigation affects the climate event probability, is the critical parameter which makes InvestESG a social dilemma. We scale the mitigation effectiveness by multiplying the original parameter $\lambda$ by a scaling factor $\alpha$ (equation \ref{eq:CLIMATE_RISK}). Identifying this regime is essential, because it mirrors the real-world climate-mitigation problem in which near-term incentives myopically outweigh the long-term benefits of coordinated action for self-interested agents \citep{barrett1994selfenforcing}. We look at three metrics: (1) {\bf Market Total Wealth} is the combined value of all companies’ capital and investors’ cash at the end of the simulation; (2) {\bf Final Mitigation Investment} is the total amount all companies have cumulatively spent on mitigation by that point; and (3) {\bf Final Climate Risk} is the model-estimated probability that at least one damaging climate event occurs at the final timestep.


\subsection{Set-up and Notation}
Consider the amount of mitigation that company $i$ invests at time $t$, denoted $u^i_t$. Intuitively, a company cares about the marginal effect, or \emph{benefit}, that this mitigation amount will have at the next time-step on their capital, $K^i_{t+1}$. Since $K^i_{t+1}$ is a random variable that depends on the realized number of climate events $X_t$, we take the marginal effect to be the partial derivative of the mitigation amount w.r.t. the expected value of the capital at the next time-step. See Appendix \ref{app:DERIVATIONS} for details.
\begin{definition}[Private Marginal Gradient] Let $u^i_t$ be the mitigation amount of company $i$ at time $t$, then the private marginal gradient is
    \begin{equation}
    \label{eq:PRIVATE_GRADIENT}
    \frac{d}{d u_{t}^i} \mathbb{E}\bigg[K_{t+1}^i \bigg] = -\frac{\mathbb{E}[K_{t+1}^i]}{1-u_t^i} + \mathbb{E}\bigg[\frac{(K_{t+1}^i)^2}{(1- X_t L_i)^2(1-u_t^i)(1+\gamma)} \sum_e \frac{\lambda_e \mu_e t}{(1+\lambda_e U_t)^2}\bigg]
\end{equation}
namely, the marginal effect in the capital of company $i$ by investing in mitigation at time-step $t$.
\end{definition}
This gradient indicates whether a company has a selfish, myopic incentive to invest in mitigation: if the gradient is positive, then the capital at the next time-step increases for each dollar invested mitigating. Similarly, we define the social marginal gradient.
\begin{definition}[Social Marginal Gradient]
 Let $u^i_t$ be the mitigation amount of company $i$ at time $t$, then the social marginal gradient is
\begin{equation}
    \label{eq:SOCIAL_GRADIENT}
    \frac{d}{d u_{t}^i} \mathbb{E}\bigg[\sum_k K_{t+1}^k \bigg] = -\frac{\mathbb{E}[K_{t+1}^i]}{1-u_t^i} + \sum_k\mathbb{E}\bigg[\frac{(K_{t+1}^k)^2}{(1- X_t L_k)^2(1-u_t^k)(1+\gamma)} \sum_e \frac{\lambda_e \mu_e t}{(1+\lambda_e U_t)^2}\bigg]
\end{equation}
namely, the marginal increase in the capital of all companies by company $i$ investing in mitigation at the next time-step.
\end{definition}
Our argument builds from the intuition that a social dilemma arises if there is a fundamental tension between these two gradients: the social marginal gradient pushes companies to mitigate, whereas the private gradient does not. The final proof, however, is more involved, as we analyze the effect of \emph{all} previous mitigation actions on the final private and social capitals. 


\subsection{When is InvestESG a Social Dilemma?}

\begin{figure}[t]
  \centering
  \includegraphics[width=0.55\linewidth]{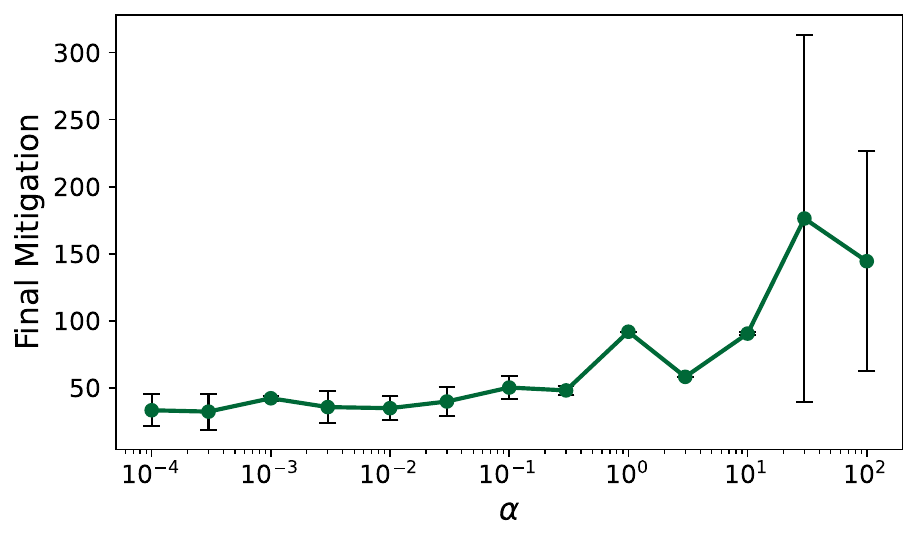}
  \caption{
    \small Final mitigation amount of 3 seeds of PPO agents trained in a single company and single investor environment for different values of $\alpha$. Here the public and private gradients are the same and we clearly see a threshold $\alpha\approx 30$ at which the final mitigation amount significantly increases. This threshold empirically marks the change of sign of the gradient. The whiskers indicate a 1-standard deviation confidence interval.
  }
  \label{fig:MITIGATION_ANALYSIS}
\end{figure}
The full stochastic dynamics of InvestESG are too complicated for exact analytic analysis so we simplify the environment. To isolate the \emph{social-dilemma} component of climate-mitigation decisions, we work under the following assumptions for the remainder of this section.

\begin{restatable}{assumption}{AlphaAssumption}\textnormal{(Static Investor Portfolios).}\label{as:static}
Investor holdings are time–invariant.  Formally,
\(
H^{j}_{t,i}=H^{j}_{0,i}\; \forall\,t\ge 0,\; i\in[M],\; j\in[N].
\)
Thus investors never rebalance their capital allocations.
\end{restatable}
\vspace{-0.5em}
\begin{restatable}{assumption}{BetaAssumption}\textnormal{(Single-Company Allocation).}\label{as:single}
Each investor supplies capital to exactly one firm. 
\end{restatable}
\begin{restatable}{assumption}{GammaAssumption}\textnormal{(No Historical Mitigation).}\label{as:nomit}
Prior to the period of interest, firms have undertaken no mitigation expenditure:
\(
u^{i}_{s}=0\;\forall\,s< t,\; i\in[M].
\)
\end{restatable}
\vspace{-0.5em}
We are interested in computing the incentives on companies to start mitigating climate change in a world where no one has ever mitigated before and where the investor dynamics are simplified. These assumptions are not used in the experimental section. In this simplified world, we show that the existence of a social dilemma critically depends on the parameter $\lambda_e$ of the simulation, which is the responsiveness of climate change to mitigation investment. We give here an outline of the argument, the full derivation is in Appendix \ref{app:DERIVATIONS}:
\begin{itemize}
    \item[1.] There exists $\lambda_{\text{low}}> 0$ small enough such that in both the individual and social case, none of the derivatives $d \mathbb{E}[K_{t+1}^i ]/d u_{t-k}^i$ are positive.
    \item[2.] In the individual case, there exists $\lambda$ such that at least some of the derivatives are positive, i.e. that it is possible to mitigate for self-interested reasons.
    \item[3.] By intermediate-value theorem there is a $\lambda_{\text{critical}}$ where at least one of $d \mathbb{E}[K_{t+1}^i ]/d u_{t-k}^i$ is $0$.
    \item[4.] For the social case, the derivatives are \emph{strictly} greater than those in the individual case i.e. company $i$'s mitigation efforts on all other companies is positive.
    \item[5.] Therefore, there exists a $\lambda$ such that self-interested agents will not mitigate despite mitigation being beneficial for social welfare, thus we have a social dilemma.
\end{itemize}
This suffices to establish the existence of 3 zones in parameter space, defined by the two parameters $\lambda_{\text{low}}$ and $\lambda_{\text{critical}}$. For $\lambda < \lambda_{\text{low}}$, there is no dilemma because there is agreement between the signs of the individual and social gradients: mitigation is always net negative. If $\lambda_{\text{low}} \leq \lambda \leq \lambda_{\text{critical}}$,the gradient signs of the individual and social returns disagree, and we have a social dilemma. If $\lambda > \lambda_{\text{critical}}$, then there is again no strong dilemma and self-interested agents start mitigating myopically. For proofs and formal theorems for each of the steps in the outline see Appendix \ref{app:DERIVATIONS}. We empirically validate that this threshold exists by conducting an experiment in which we train three seeds of PPO agents, keeping the number of companies and investors fixed to one, but sweeping over the scaling parameter $\alpha$ (Figure \ref{fig:MITIGATION_ANALYSIS}). We find the threshold to be $\alpha=70$, after running sweeping experiments with the default parameters (see Appendix \ref{app:ALPHA_ABLATION}). When $\alpha$ increases beyond this threshold, PPO agents with different ESG incentives start to show different behaviors (Figure \ref{fig:ESG_SCORE_COMPARISSONS}). We refer to this specific parameterization of InvestESG, with $\alpha=70$, as $\alpha$-InvestESG.

\begin{figure}[t]
  \centering
  \vspace{-5mm}
  \includegraphics[width=1\linewidth]{ 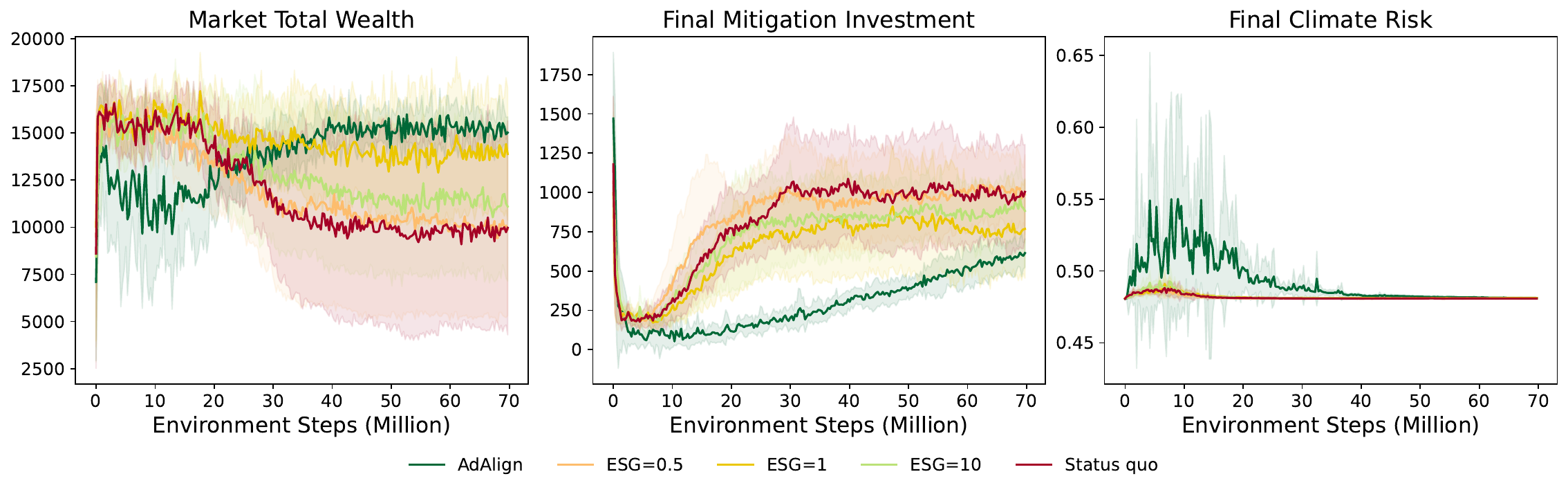}
  \caption{
  \small Training curves of PPO agents with different ESG values and Advantage Alignment (AdAlign) with $\alpha = 70$ for different metrics: Advantage Alignment agents achieve highest market total wealth by being more strategic about their mitigation investments compared to PPO agents. With significant lower final mitigation investment, AdAlign agents are able to achieve the same final climate risk and increase their capital returns. The shaded areas indicate a 1-standard deviation confidence interval. We note that $0.48$ is the best achievable climate risk in the environment, as it corresponds to $1-\prod_e (1-P_0^e)$, the floor of the probabilities of each event.}
  \label{fig:TRAINING_CURVES}
\end{figure}
\section{Applying Opponent Shaping to InvestESG}

We apply Advantage Alignment \citep{duque2025advantagealignmentalgorithms} without ESG incentives to $\alpha$-InvestESG (found empirically in the previous section), in a centralized training decentralized execution manner (CDTE). Note that  $\alpha$-InvestESG does not have the simplifying assumptions of the theoretical analysis and has the full complexity of the original simulator. Crucially, our Advantage Alignment implementation differs from the PPO implementation in two aspects. First, we substitute the advantage in the policy gradient with the Advantage Alignment expression (equation \ref{eq:INTEGRATED_ADVANTAGE}). Second, following the original paper we use \emph{self-play}: one set of parameters for the company and another set of parameters for investor players. We found that self-play makes training more stable for Advantage Alignment (AdAlign) agents but hurts PPO agents, thus we compare the best version of each in Figure \ref{fig:TRAINING_CURVES} (AdAlign with self-play, PPO without). Advantage Alignment agents outperform all PPO agents, even those with ESG incentives, while simultaneously having the lowest amount of final mitigation investment. For details about the policy parameterization and hyper-parameters used for these experiments refer to the Appendix \ref{app:EXPERIMENTS}.

\subsection{Comparison with other  Baselines}
Intuitively if we care about social welfare, then we should try to maximize it explicitly. However, this does not yield good results in practice. We run experiments with summed rewards with the default InvestESG settings ($5$ companies, $3$ investors), however the IPPO and MAPPO \citep{yu2022surprisingeffectivenessppocooperative} agents always converged to sub-optimal equilibria. Our rationale for the baseline selection is discussed in the appendix \ref{app:baseline_scope}. Figure \ref{fig:SUM_REWARDS} (a) shows that naively summing the rewards scales poorly in the multi-agent setting. We run $10$ seeds of IPPO and MAPPO agents that sum their rewards, scaling the number of agents in the environment, and plot the final market total wealth with initial fixed capital. Beyond $2$ companies and $2$ investors ($4$ agents total), PPO players summing the rewards are unable to find policies that maximize the social welfare. We hypothesize that the credit assignment problem becomes more difficult as the number of agents increases, giving noisy temporal difference signals to centralized critic methods like MAPPO. In contrast, Advantage Alignment maintains high social welfare across all experiments. We analyze the reasons why this happens in section \ref{sec:AA_EFFECTIVENESS}.

\subsection{Interpreting Advantage Alignment Policies}
To assess how opponent shaping alters investment behaviors in the social-dilemma setting, we run simulations of Advantage Alignment (AA) and the standard PPO baseline with ESG score $=0$, which means that there is no external incentive to mitigate. The policies obtained are summarized in Fig. \ref{fig:POLICY_INTERPRETATION}. Three patterns emerge. (i) AA learns to mitigate just enough and only at the critical moments when climate risk increases, whereas PPO invests excessively. (ii) The capital allocations show that AA maintains an approximately uniform investment distribution across companies, while PPO concentrates wealth, replicating the disparities reported by \citet{hou2025investesgmultiagentreinforcementlearning} (Figure \ref{fig:SUM_REWARDS}(b)). (iii)  AA agents learn to coordinate themselves to share mitigation costs, in contrast to PPO agents whose independent updates yield uncoordinated, non-cooperative mitigation investments.

\begin{figure}[t]
  \centering
  \vspace{-5mm}
  \begin{subfigure}[t]{0.495\linewidth}
    \centering
    \includegraphics[width=\linewidth]{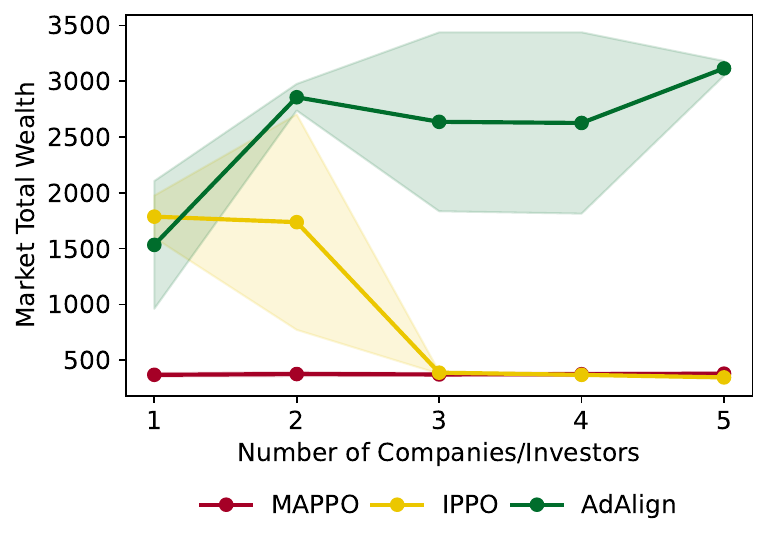}
    \caption{
    }
  \end{subfigure}%
  \hfill
  \begin{subfigure}[t]{0.49\linewidth}
    \centering
    \includegraphics[width=\linewidth]{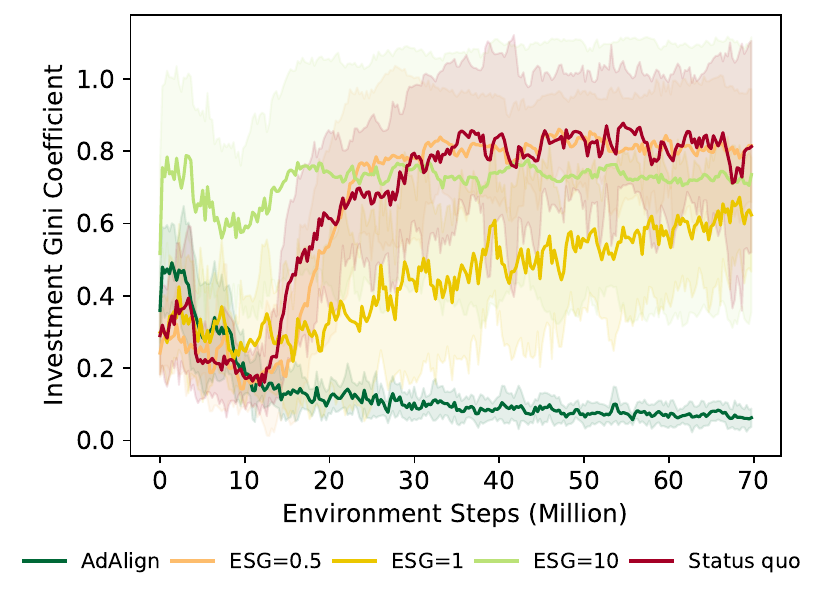}
    \caption{
    }
  \end{subfigure}
  \caption{\small (a) Final market total wealth of 10 seeds of Advantage Alignment without ESG incentives, and PPO agents trained using summed rewards in InvestESG ($\alpha=70$). On the x-axis we increase the number of companies and investors (1 company and 1 investor, 2 companies and 2 investors, etc.) while keeping the initial capital the same. Sum rewards is unable to find the action profile that maximizes social welfare once the number of players grows beyond a threshold ($>2$), whereas Advantage Alignment consistently finds the same solution once the number of agents is large enough ($>1$) going up to $10$ agents.  (b) Gini coefficient measuring inequality among company investments for different algorithms. Lower Gini indicates lower inequality and better distribution of resources. The shaded areas indicate a 1-standard deviation confidence interval.}
  \label{fig:SUM_REWARDS}
\end{figure}
\section{On the Effectiveness of Advantage Alignment}
\label{sec:AA_EFFECTIVENESS}
Advantage Alignment is particularly effective at finding social welfare maximizing strategies in social dilemmas at least in part because it has a strong initial bias towards cooperative strategies. When advantages are estimated with Generalized Advantage Estimation (GAE; \citealp{schulman2018highdimensional}), Advantage Alignment nudges agents toward cooperation.  For agent $i$ the transformed advantage is  
\begin{equation}
A^{*,i}_t \;=\; A^i_t \;+\; \beta\gamma\sum_{j\neq i}\!\Bigl(\sum_{k<t}\gamma^{t-k}A^i_k\Bigr)A^j_t,
\end{equation}
with $A^i_t := A^i(s_t,\mathbf a_t)$.  Define the on-policy mean  
$b^i \!:=\! \mathbb E\bigl[\sum_{k<t}\gamma^{t-k}A^i_k\bigr]$.  Decomposing the expectation gives
\begin{align}
A^{*,i}_t \;=\; \underbrace{A^i_t + \beta\gamma b^i \sum_{j\neq i}A^j_t}_{\text{cooperative bias}}
+\;
\beta\gamma\sum_{j\neq i}\!\bigl(\underbrace{\sum_{k<t}\gamma^{t-k}A^i_k - b^i\bigr)}_{\text{zero-mean}}A^j_t.
\end{align}
The first term favors joint gains; if $\beta\gamma b^i = 1$, the cooperative term is exactly that encountered in summed-reward learning. With perfect critics, $b^i=0$, this bias vanishes. However, during actor-critic training, the critic lags the improving policy, so the one-step TD error
\begin{equation}
    \delta_t = r_t + \gamma V_\phi(s_{t+1}) - V_\phi(s_t),
\end{equation}
is positively biased (see \cite{DBLP:conf/colt/KearnsS00}) for a formal argument based on contraction mappings): $\mathbb E[\delta_t] > 0 \!\implies\! \mathbb E[A^{\text{GAE}}_t] > 0 \!\implies\! b^i > 0$.  Hence early training amplifies cooperative updates.  As the critic catches up and TD errors become unbiased, $b^i\!\to\!0$ and the cooperative bias fades. This initial bias prevents the algorithm from falling into defective equilibria early in training.


\section{Related Work}
Opponent shaping treats co-learners as adaptive parts of the environment and
deliberately steers their updates.
LOLA \citep{foerster2018learning} introduced the idea by differentiating
through a single naïve policy-gradient step of other players. Follow-up methods refined the update rule \citep{letcher2021stable,zhao2022proximal,willi2022cola} or approximated best-response dynamics
\citep{aghajohari2024best,lu2022modelfree}.
LOQA replaced higher-order differentiation with REINFORCE-based control of opponents’ Q‐values, enabling scalability to larger
games \citep{aghajohari2024loqa}.
Previous work on \emph{Advantage Alignment} showed that many of these algorithms implicitly multiply agents’ advantages and offered a simpler,
computationally cheaper formulation that preserves Nash equilibria in general-sum games \citep{duque2025advantagealignmentalgorithms}.
The present paper leverages that insight to shape investor–company interactions in a realistic climate-finance simulator. A growing line of research embeds MARL agents inside climate models to study emergent behavior, test policy instruments, or optimize control strategies. RICE-N extends Nordhaus’ regional integrated assessment model into a multi-agent game in which learning agents negotiate emission targets and investment pathways \citep{zhang2022ai}. The \emph{AI for Global Climate Cooperation} competition has since used this platform to study how AI negotiators can sustain long-run cooperation under enforcement uncertainty. Our work differs in scope: rather than modeling inter-government bargaining, we focus on the micro-level interaction between profit-seeking firms and ESG-motivated investors captured by InvestESG \citep{hou2025investesgmultiagentreinforcementlearning}. Inspired by the \emph{AI Economist} \citep{zheng2021ai_economist}, Wang et al.\ simulate a cap-and-trade market with enterprise and government agents; the
government agent learns dynamic permit-allocation rules that balance output, equity, and emissions \citep{Wang2024CarbonMarket}. Our study
is complementary: we assume no central planner and instead demonstrate that shaping firm–investor learning alone can steer the economy toward
socially desirable equilibria.

\section{Conclusion}

In this paper, we presented a rigorous theoretical and empirical examination of economic simulations under climate risk, using game theory. By formally characterizing the InvestESG environment as an intertemporal social dilemma, we established the precise conditions under which individual incentives diverge from collective sustainability goals. This allowed us to fine-tune InvestESG to capture the social dilemma conditions observed in the real world. Leveraging the scalable power of Advantage Alignment, we demonstrated that strategically shaping agent learning dynamics can effectively mitigate selfish incentives, systematically driving the equilibrium toward socially beneficial outcomes, even in the absence of government mandates. Our empirical results underscore the practical efficacy of opponent-shaping algorithms in complex, high-dimensional economic scenarios, addressing a critical gap where previous multi-agent RL methods have failed. By elucidating why Advantage Alignment naturally improves the credit assignment problem and finds more cooperative equilibria, we provided key insights that pave the way for robust, policy-relevant solutions capable of maximizing the social welfare more effectively. Beyond theoretical advancements, this work signifies a tangible step toward harnessing artificial intelligence to tackle real-world climate policy challenges and inform government decision-making.

\section*{Reproducibility Statement}
We provide all details needed to replicate our results. The full set of training hyperparameters, model architectures, optimizer choices, rollout lengths, discounting/GAE settings, clipping, and gradient/entropy coefficients for all methods (IPPO and Advantage Alignment) are listed in Appendix~\ref{app:EXPERIMENTS}, Tables~\ref{tab:ppo_hyperparams} and~\ref{tab:aa_hyperparams}. We also report the number of parallel environments, episode length, total environment steps, and (where applicable) the use of self-play, as well as the number of random seeds for each experiment in the figure captions. Upon de-anonymization, we will publicly release the complete codebase, including experiment configs, training scripts, and plotting utilities to reproduce all figures and tables from scratch.

\bibliographystyle{abbrvnat}   
\bibliography{references}


\newpage

\appendixwithtoc


\newpage
\section{Mathematical Derivations}
\label{app:DERIVATIONS}

In this section we prove the main result of this work, namely, that InvestESG is a social dilemma for the correct choice of the climate mitigation effectiveness parameter $\lambda$. We begin by stating our assumptions about the environment:
\AlphaAssumption*
\BetaAssumption*
\GammaAssumption*

\subsection{Individual Case Analysis}

The capital of a company is:
\begin{align}
    K^i_{t+1} = (1+\rho^i_t)K^i_{t+1, \text{interim}}.\\
    \rho^i_t = (1-u^i_t)(1+\gamma)(1 - X_{t}L_i) - 1.\\
    K_{t+1, \text{interim}}^{i} = K_{t}^{i} - \sum_{j=1}^N H_{t,i}^{j} + \sum_{j=1}^N a_{t,i}^{j} \frac{\mathcal{K}_t^{j}}{||\mathbf{a}_t^j||_1}.
\end{align}
where $\mathcal{K}_t^{j}$ is the capital of the $j$-th investor. In this argument we will isolate the companies' behavior by assuming that the investors do not change their capital allocations over time, and that each investor invests in a single company. This means that the money returned to investors is the same as the money investors put in at the current time-step. Giving us 
\begin{align}
    K_{t+1, \text{interim}}^{i} = K_{t}^{i},
\end{align}
Plugging this into the equation for $K_{t+1}^i$ gives us:
\begin{align}
    \label{eq:MAIN}
    K_{t+1}^i = \bigg((1-u^i_t)(1+\gamma)(1 - X_{t}L_i)\bigg)K_{t}^{i}.
\end{align}
Here $X_t$ is the number of climate events at time $t$, it can go up to $3$, if extreme heat, precipitation and drought all happen. The probability of each of those events in $X_t$ depends on the mitigation up to time $t$, called $U_t$:
\begin{align}
    \label{eq:P_eq}
     P^e_t \;=\; \frac{\mu_e t}{1+\lambda_e U_t}\;+\;P^e_0,
    \qquad
    \lambda^e=\alpha\times\tilde\lambda^e,
\end{align}
Let's consider a company which has never done any mitigation in the past, so $U_{t-1}=0$, but is considering doing so at this time-step, we can differentiate $\mathbb{E}[K_{t+1}^i]$ with respect to $u_{t}^i$ to investigate if it has an incentive to start mitigating. If this derivative is positive, then the company has an immediate, myopic incentive to take mitigation actions.

\begin{lemma}
    \label{lem:inequality_1}
    For every time-step $t$, company $i$ and scalar $\alpha_i > 0$, we have that the social marginal gradient is strictly greater than the private marginal gradient.
    \begin{equation}
        \frac{d}{d u_{t}^i} \sum_j \alpha_j\mathbb{E}\bigg[K_{t+1}^j \bigg] > \alpha_i\frac{d}{d u_{t}^i} \mathbb{E}\bigg[K_{t+1}^i \bigg].
    \end{equation}
\end{lemma}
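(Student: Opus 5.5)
The plan is to expand the left-hand side by linearity and treat the own term and the cross terms separately:
\[
\frac{d}{d u_t^i}\sum_j \alpha_j \mathbb{E}[K_{t+1}^j] \;=\; \alpha_i \frac{d}{d u_t^i}\mathbb{E}[K_{t+1}^i] \;+\; \sum_{j\neq i}\alpha_j \frac{d}{d u_t^i}\mathbb{E}[K_{t+1}^j].
\]
It then suffices to show that every cross derivative $\frac{d}{du_t^i}\mathbb{E}[K_{t+1}^j]$ with $j\neq i$ is strictly positive. Since $\alpha_j>0$, the cross sum is then strictly positive, which gives the claim. This requires at least one other company ($M\geq 2$); otherwise the inequality degenerates to equality.

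To compute a cross derivative, use equation~\eqref{eq:MAIN}. Under Assumptions~\ref{as:static} and~\ref{as:single} we have
\[
K_{t+1}^j=(1-u_t^j)(1+\gamma)(1-X_tL_j)K_t^j,
\]
and $K_t^j$ does not depend on $u_t^i$. So $u_t^i$ enters $K_{t+1}^j$ only through the law of $X_t$, that is, through the event probabilities $P_t^e$ via $U_t$. Write
\[
\mathbb{E}[1-X_tL_j]=1-L_j\sum_e P_t^e.
\]
Then
\[
\frac{d}{du_t^i}\mathbb{E}[K_{t+1}^j]=-(1-u_t^j)(1+\gamma)K_t^jL_j\sum_e \frac{dP_t^e}{du_t^i}.
\]
From equation~\eqref{eq:P_eq},
\[
\frac{dP_t^e}{dU_t}=-\frac{\lambda_e\mu_e t}{(1+\lambda_eU_t)^2},
\qquad
\frac{dU_t}{du_t^i}=K_{t+1,\text{interim}}^i=K_t^i>0.
\]
Putting these together,
\[
\frac{d}{du_t^i}\mathbb{E}[K_{t+1}^j]=(1-u_t^j)(1+\gamma)K_t^jL_jK_t^i\sum_e\frac{\lambda_e\mu_e t}{(1+\lambda_eU_t)^2}.
\]
This is strictly positive provided $u_t^j<1$ (guaranteed by $u\le\bar u<1$), $L_j>0$, $K_t^i,K_t^j>0$, $\lambda_e,\mu_e>0$ and $t\ge 1$. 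This computation matches the cross terms in the social marginal gradient~\eqref{eq:SOCIAL_GRADIENT}, which differs from~\eqref{eq:PRIVATE_GRADIENT} only by those nonnegative expectations.

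The main obstacle is making the positivity rigorous rather than formal. First, the expression in~\eqref{eq:SOCIAL_GRADIENT} involves terms like $(K_{t+1}^k)^2/(1-X_tL_k)^2$, which need $1-X_tL_k\neq0$. I would avoid this by working directly with the linear-in-$X_t$ form above, which sidesteps the division. Second, the strictness conditions have to be stated explicitly: $t\ge1$ (at $t=0$ the derivative vanishes), positive loss coefficients, and positive capital. I will state these as standing nondegeneracy conditions of the environment. I would also note that the Bernoulli events are independent, so $\mathbb{E}[X_t]=\sum_e P_t^e$ holds exactly and no second-order terms arise.
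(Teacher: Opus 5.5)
Your proposal is correct and follows the paper's skeleton. You split off the own term and show that each cross derivative $\frac{d}{du_t^i}\mathbb{E}[K_{t+1}^j]$, $j\neq i$, is positive, because $u_t^i$ reaches $K_{t+1}^j$ only through $U_t$ and the event probabilities.

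Where you differ is in how that derivative is computed, and your version is cleaner. The paper approximates $X_t$ by a single Bernoulli variable with parameter $\sum_e P_t^e$ (``for small values of climate risk''). It then uses $\partial_x(xyz)=yz$ to write everything with divisions by $1-X_tL_j$. You instead observe that $K_{t+1}^j$ is affine in $X_t$, so only $\mathbb{E}[X_t]=\sum_e P_t^e$ enters. That makes the computation exact, and it needs only linearity of expectation, not independence of the events.

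Your route also keeps the factor $L_j$, which the paper's \eqref{eq:d_K_d_u_t_other} silently drops. Since $K_{t+1}^j/(1-X_tL_j)=(1-u_t^j)(1+\gamma)K_t^j$, the paper's cross term is exactly yours without $L_j$. So your remark that your formula ``matches'' \eqref{eq:SOCIAL_GRADIENT} holds only in sign. The main-text formula even has $(K_{t+1}^k)^2$ where the appendix has $K_{t+1}^kK_{t+1}^i$. This is harmless, since only positivity is used.

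More substantively, the paper only argues that the cross terms are non-negative, yet asserts a strict inequality. Your explicit conditions ($M\ge 2$, $t\ge 1$, $L_j>0$, positive capitals, $\bar u<1$) are what make it strict. They also show that the lemma as literally stated fails at $t=0$ and with a single company.

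One thing to add: $K_t^i$, $K_t^j$ and $U_t$ are random, since they depend on $X_0,\dots,X_{t-1}$. Read your formula as conditional on the pre-$t$ history, then take the outer expectation of an almost surely positive quantity. Under the no-historical-mitigation assumption that history is finitely valued, so exchanging derivative and expectation is immediate.
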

\begin{proof}
    \begin{align}
    \frac{d}{d u_{t}^i} \mathbb{E}\bigg[K_{t+1}^i \bigg] = \frac{d}{d u_{t}^i}\mathbb{E}\bigg[\bigg((1-u^i_t)(1+\gamma)(1 - X_{t}L_i)\bigg)K_{t}^{i} \bigg].
\end{align}
The only parts of this equation that depend on $u_t^i$ are the factor of $(1-u_t^i)$ and through $(1-X_t L_i)$. Now for small values of climate risk events, we can approximate $X_t$ as a Bernoulli variable with probability $P_t = \sum_e P_t^e$, hence the expectation above can be written as a sum over $X_t = \{0,1\}$, the algebraic trick used here comes from $f(x,y,z) = xyz \implies df/dx = yz = f(x,y,z)/x $, giving us:
\begin{align}
    \frac{d}{d u_{t}^i} \mathbb{E}\bigg[K_{t+1}^i \bigg] = -\frac{\mathbb{E}[K_{t+1}^i]}{1-u_t^i} - \mathbb{E}\bigg[\frac{K_{t+1}^i}{1- X_t L_i}\frac{dP_t}{d u_t^i}\bigg].
\end{align}
And now for $\frac{d P_t}{d u_{t}^i}$, referring back to equation \ref{eq:P_eq}:
\begin{align}
    \frac{d P_t}{d u_{t}^i} = \frac{dU_t}{d u_t^i} \frac{d}{d U_t}\sum_e\bigg(\frac{\mu_e t}{1+\lambda_e U_t}\;+\;P^e_0\bigg) = K^i_{t+1, \text{interim}}\sum_e \frac{-\lambda_e \mu_e t}{(1+\lambda_e U_t)^2},
\end{align}
Where the factor of $K^i_{t+1, \text{interim}}$ comes from the fact that $U_t \equiv \sum_{s<t} u_s^i K^i_{s+1, \text{interim}}$. And finally:
\begin{align}
    \label{eq:d_K_d_u_t}
    \frac{d}{d u_{t}^i} \mathbb{E}\bigg[K_{t+1}^i \bigg] = -\frac{\mathbb{E}[K_{t+1}^i]}{1-u_t^i} + \mathbb{E}\bigg[\frac{K_{t+1}^i}{1- X_t L_i} K^i_{t+1, \text{interim}} \sum_e \frac{\lambda_e \mu_e t}{(1+\lambda_e U_t)^2}\bigg].
\end{align}
This equation places a lower bound on how incentivized a selfish company is to invest in mitigation, it does not include the long-term effects of mitigation at all. If $\lambda_e$ is set high enough that this derivative is positive, we do not have a social dilemma at all. Noticing that $K^i_{t+1, \text{interim}} = K_{t+1}^i/(1+\rho_t^i)$ (from the equations at the beginning of the section), we can replace:
\begin{align}
    \label{eq:d_K_d_u_t_2}
    \frac{d}{d u_{t}^i} \mathbb{E}\bigg[K_{t+1}^i \bigg] = -\frac{\mathbb{E}[K_{t+1}^i]}{1-u_t^i} + \mathbb{E}\bigg[\frac{(K_{t+1}^i)^2}{(1- X_t L_i)^2(1-u_t^i)(1+\gamma)} \sum_e \frac{\lambda_e \mu_e t}{(1+\lambda_e U_t)^2}\bigg].
\end{align}
We can also trivially run the same steps for $\frac{d}{d u_{t}^i} \mathbb{E}\bigg[K_{t+1}^j \bigg]$ with $j \neq i$, i.e. computing the derivative of the return of agent $j$ w.r.t. the mitigation of agent $i$, which will remove the first term in the previous equation. The cumulative mitigation $U_t$ includes terms from all agents, so the contribution of the $(1-X_tL_j)$ term to the gradient remains:
\begin{align}
    \label{eq:d_K_d_u_t_other}
    \text{for }  i\neq j : \frac{d}{d u_{t}^i} \mathbb{E}\bigg[K_{t+1}^j \bigg] = \mathbb{E}\bigg[\frac{K_{t+1}^j K_{t+1}^i}{(1- X_t L_j)(1- X_t L_i)(1-u_t^i)(1+\gamma)} \sum_e \frac{\lambda_e \mu_e t}{(1+\lambda_e U_t)^2}\bigg].
\end{align}
We notice that this is strictly non-negative, as expected (since all $K_t^i$ are non-negative). All agents would want someone else to mitigate for them, they get all the benefits of mitigation without any of the costs. Together with the non-negativity of $K_t^i$, we get the following inequality:
\begin{align}
    \label{eq:inequality_1}
    \forall i \qquad \frac{d}{d u_{t}^i} \sum_j \mathbb{E}\bigg[K_{t+1}^j \bigg] > \frac{d}{d u_{t}^i} \mathbb{E}\bigg[K_{t+1}^i \bigg],
\end{align}
which falls straight out of equation \ref{eq:d_K_d_u_t_other} being strictly non-negative. This means that the "social derivative" is always greater than the selfish derivative for mitigation actions $u_t^i$. In particular, we also get the extra inequality:
\begin{align}
    \label{eq:inequality_1_general}
    \forall t \forall i \in \{0, ..., n\}, \;\forall \alpha_i > 0 \qquad \frac{d}{d u_{t}^i} \sum_j \alpha_j\mathbb{E}\bigg[K_{t+1}^j \bigg] > \alpha_i\frac{d}{d u_{t}^i} \mathbb{E}\bigg[K_{t+1}^i \bigg].
\end{align}
which we will need later. 
\end{proof}
\subsection{Dependence on all Previous Mitigation Actions}
\begin{lemma}
     For every \textbf{previous} time-step $t-k,\; 0<k<t$, company $i$ and scalar $\alpha_i > 0$, we have that the social gradient is strictly greater than the private gradient.
    \begin{equation}
        \frac{d}{d u_{t-k}^i} \sum_l \alpha_l \mathbb{E}\bigg[K_{t+1}^l \bigg] >  \frac{d}{d u_{t-k}^i} \alpha_i \mathbb{E}\bigg[K_{t+1}^i \bigg].
    \end{equation}
\end{lemma}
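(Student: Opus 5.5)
The plan is to reduce the claim to the one-step computation in Lemma~\ref{lem:inequality_1} and to show that an earlier mitigation action $u^i_{t-k}$ reaches the other companies' capitals $K^l_{t+1}$, $l\neq i$, only through beneficial channels. Under Assumptions~\ref{as:static}--\ref{as:single}, equation~\eqref{eq:MAIN} unrolls to
\begin{equation*}
K^l_{t+1} = K^l_0 \prod_{s=0}^{t} (1-u^l_s)(1+\gamma)(1-X_sL_l).
\end{equation*}
So $u^i_{t-k}$ affects $K^l_{t+1}$ in two ways. For $l=i$ it acts directly through the factor $(1-u^i_{t-k})$. For every company it also acts through the cumulative mitigation $U_s$ for all $s\ge t-k$, which enters each climate probability $P^e_s$. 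Subtracting $\alpha_i\,\frac{d}{du^i_{t-k}}\mathbb{E}[K^i_{t+1}]$ from both sides, the claim is equivalent to
\begin{equation*}
\sum_{l\neq i}\alpha_l \frac{d}{du^i_{t-k}}\mathbb{E}[K^l_{t+1}] > 0.
\end{equation*}
It therefore suffices to show that each cross derivative is nonnegative and at least one is strictly positive.

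For $l\neq i$, I would differentiate the product by the product rule. The factors $(1-u^l_s)$ and $K^l_0$ do not depend on $u^i_{t-k}$. I would keep the Bernoulli approximation of $X_s$ with $P_s=\sum_e P^e_s$, as in Lemma~\ref{lem:inequality_1}, and assume the $X_s$ are conditionally independent across $s$ given the mitigation path. With that assumption, $\mathbb{E}[K^l_{t+1}]$ factorizes into a product of terms $\mathbb{E}[1-X_sL_l] = 1-P_sL_l$. The derivative then becomes a sum over $s\in\{t-k,\dots,t\}$ of terms of the form
\begin{equation*}
\frac{\mathbb{E}[K^l_{t+1}]}{1-P_sL_l}\, L_l\, \frac{dU_s}{du^i_{t-k}} \sum_e \frac{\lambda_e\mu_e s}{(1+\lambda_e U_s)^2}.
\end{equation*}
This has the same shape as equation~\eqref{eq:d_K_d_u_t_other}, with one such term for each future period.

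The key sign fact is $\frac{dU_s}{du^i_{t-k}}\ge K^i_{t-k+1,\text{interim}}>0$. Under Assumption~\ref{as:nomit} and the static-portfolio reduction $K^i_{s+1,\text{interim}}=K^i_s$, I would argue this through the recursion for $U_s$. Raising $u^i_{t-k}$ adds $K^i_{t-k+1,\text{interim}}$ directly to $U_{t-k}$, and hence to every later $U_s$. It also lowers company $i$'s later capital $K^i_s$, which reduces $u^i_sK^i_s$ for $s>t-k$. Under Assumption~\ref{as:nomit} those later $u^i_s$ are zero, so this offsetting effect vanishes at the reference point where the derivative is taken, and the derivative is exactly $K^i_{t-k+1,\text{interim}}$. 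Given that, every summand is positive whenever $L_l>0$, $\mu_e>0$, $\lambda_e>0$ and capitals are positive. Each cross derivative is thus positive, and the strict inequality follows because the weights $\alpha_l>0$.

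The main obstacle is the indirect channel. Mitigation by $i$ lowers $i$'s own capital and so lowers its future mitigation base in $U_s$, and the expectation over the $X_s$ need not factorize cleanly. I would handle the first issue by evaluating the derivative at the no-future-mitigation point, as Assumption~\ref{as:nomit} suggests. If needed, I would handle the second by conditioning on the realized event path and using that $K^l_{t+1}$ is monotonically decreasing in each $X_s$, while each $P^e_s$ is decreasing in $U_s$. That monotonicity gives nonnegativity without requiring exact factorization.
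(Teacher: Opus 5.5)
Your proof is correct, and it reaches the statement by a different route than the paper. The paper never isolates the cross terms. It writes a one-step recurrence (equations~(\ref{eq:recurrence_2}) and~(\ref{eq:recurrence_3})) expressing $\frac{d}{du^i_{t-k}}\mathbb{E}[K^l_{t+1}]$ as a direct term through $P_t$ plus $(1-u^l_t)(1+\gamma)(1-P_tL_l)\,\frac{d}{du^i_{t-k}}\mathbb{E}[K^l_t]$. It then inducts on $k$, with Lemma~\ref{lem:inequality_1} as the base case. That is why the statement carries arbitrary positive weights: the recurrence reweights company $l$ by $\alpha_l'=\alpha_l(1-u^l_t)(1+\gamma)(1-P_tL_l)$, so the inductive hypothesis must hold for every positive weighting.

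Your observation that the claim is literally $\sum_{l\neq i}\alpha_l\,\frac{d}{du^i_{t-k}}\mathbb{E}[K^l_{t+1}]>0$ makes that machinery unnecessary. Positive cross derivatives give the inequality for all positive weights at once, and you never have to compute the own-company derivative with its cost term. Unrolling also buys you three things:
\begin{itemize}
\item an explicit formula for the externality as a sum of per-period contributions over $s=t-k,\dots,t$;
\item visible hypotheses behind strictness (some $l\neq i$ with $L_l>0$, some $\lambda_e\mu_e>0$), whereas the paper only asserts non-negativity of the cross terms while claiming a strict inequality;
\item no need for the Bernoulli approximation in the cross terms, since $\mathbb{E}[1-X_sL_l]=1-L_l\sum_eP^e_s$ exactly.
\end{itemize}
The paper's recursion, in return, reuses the one-step computation directly, and the same recurrence later drives Theorem~\ref{low-mitigation}.

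Two points need tightening.

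First, $dU_s/du^i_{t-k}=K^i_{t-k}$ is random. It is correlated with $K^l_{t+1}$ through the shared event counts $X_0,\dots,X_{t-k-1}$, so your displayed summand cannot carry $\mathbb{E}[K^l_{t+1}]$ and $dU_s/du^i_{t-k}$ as separate factors. Make your fallback the main line:
\begin{itemize}
\item condition on $X_0,\dots,X_{t-k-1}$, whose law does not depend on $u^i_{t-k}$;
\item factorize and differentiate the conditional expectation;
\item only then take the outer expectation, which is an expectation of a product of nonnegative quantities.
\end{itemize}

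Second, the no-historical-mitigation assumption sets $u^i_s=0$ only for $s<t$. So at $s=t$ the offsetting channel you describe survives. For a fixed event history, $\partial U_t/\partial u^i_{t-k}=K^i_{t-k}-u^i_tK^i_t$, which is negative on event-free paths once $u^i_t(1+\gamma)^k>1$. Either evaluate at $u^i_t=0$ as well, or say explicitly that you drop this term, as the paper silently does in its formula for $dP_t/du^i_{t-k}$. Relatedly, your ``$\ge K^i_{t-k+1,\text{interim}}$'' should be an equality at the reference point, since the offsetting channel can only push the derivative down.
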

\begin{proof}
    Writing the main equation again for visibility:
\begin{align}
    \tag{\ref{eq:MAIN}}
    K_{t+1}^i = \bigg((1-u^i_t)(1+\gamma)(1 - X_{t}L_i)\bigg)K_{t}^{i}.
\end{align}
To compute the dependence of $K_{t+1}^i$ on $u_{t-k}^i$ for a general $k\geq1$, we note that this enters in two ways in the equation above, first through the effect on $P_i$, which affects $K_{t+1}^j$ through $(1-u^i_t)(1+\gamma)(1 - X_{t}L_i)$, and this will give us a factor like the second one in equation \ref{eq:d_K_d_u_t}. The second way is through effects on $K_t^i$, which were not present in the previous section:
\begin{align}
    \label{eq:recurrence_1}
    \frac{d}{d u_{t-k}^i} \mathbb{E}\bigg[K_{t+1}^i \bigg] = -\mathbb{E}\bigg[\frac{K_{t+1}^i}{1- X_t L_i} \frac{dP_t}{d u_{t-k}^i}\bigg] +
    \bigg((1-u^i_t)(1+\gamma)(1 - P_t L_i)\bigg) \frac{d}{d u_{t-k}^i}\mathbb{E}\bigg[K_{t}^{i}\bigg].
\end{align}
We have used the independence of $X_t$ and $K_t$ to push in the expectation values. These two variables are independent if the past mitigation actions are kept constant, which is the case for us, since we are analyzing the case where all companies have never invested in mitigation (by construction). $dP_t/du_{t-k}$ is computed as before:
\begin{align}
    \frac{d P_t}{d u_{t-k}^i} = \frac{dU_t}{d u_{t-k}^i} \frac{d}{d U_t}\sum_e\bigg(\frac{\mu_e t}{1+\lambda_e U_t}\;+\;P^e_0\bigg) = K^i_{t+1-k, \text{interim}}\sum_e \frac{-\lambda_e \mu_e t}{(1+\lambda_e U_t)^2}.
\end{align}
We can now use equation \ref{eq:recurrence_1} to induce a recurrence relation between the derivatives of all factors $\frac{d}{d u_{t-k}^i} \mathbb{E}\bigg[K_{t+1}^i \bigg]$. All put together, the recurrence relation is
\begin{align}
    \label{eq:recurrence_2}
    \frac{d}{d u_{t-k}^i} \mathbb{E}\bigg[K_{t+1}^i \bigg] =\;\; &\mathbb{E}\bigg[\frac{(K_{t+1}^i)^2}{(1- X_t L_i)^2(1-u_t^i)(1+\gamma)}\sum_e \frac{\lambda_e \mu_e t}{(1+\lambda_e U_t)^2}\bigg] \notag \\
    &+\bigg((1-u^i_t)(1+\gamma)(1 - P_t L_i)\bigg) \frac{d}{d u_{t-k}^i}\mathbb{E}\bigg[K_{t}^{i}\bigg].
\end{align}
And for the derivative with a different agent $l \neq i$:
\begin{align}
    \label{eq:recurrence_3}
    \frac{d}{d u_{t-k}^i} \mathbb{E}\bigg[K_{t+1}^l \bigg] = &\mathbb{E}\bigg[\frac{K_{t+1}^i K_{t+1}^l}{(1- X_t L_i)(1-X_t L_l)(1-u_t^i)(1+\gamma)}\sum_e \frac{\lambda_e \mu_e t}{(1+\lambda_e U_t)^2}\bigg] \notag \\
    &+\bigg((1-u^l_t)(1+\gamma)(1 - P_t L_l)\bigg) \frac{d}{d u_{t-k}^i}\mathbb{E}\bigg[K_{t}^{l}\bigg].
\end{align}
We now wish to get the equivalent of lemma \ref{lem:inequality_1} for mitigations $u_{t-k}^i$ at any previous time-step. We use a proof by induction on the index $k$ from the base case of $k=0$ in equation \ref{eq:inequality_1_general}. Let $\alpha_i > 0$ be strictly positive real numbers:
\begin{align}
    \frac{d}{d u_{t-k}^i} \sum_l \alpha_l \mathbb{E}\bigg[K_{t+1}^l \bigg] &= \notag \\ &\sum_l \alpha_l \mathbb{E}\bigg[\frac{ K_{t+1}^i K_{t+1}^l}{(1- X_t L_i)(1-X_t L_l)(1-u_t^i)(1+\gamma)}\sum_e \frac{\lambda_e \mu_e t}{(1+\lambda_e U_t)^2}\bigg]  \notag \\ &+\sum_l \alpha_l \bigg((1-u^l_t)(1+\gamma)(1 - P_t L_l)\bigg) \frac{d}{d u_{t-k}^i}\mathbb{E}\bigg[K_{t}^{l}\bigg].
\end{align}
We now invoke the induction assumption for the term in the last line, which matches the induction case with $\alpha'_l = \alpha_l \big((1-u^l_t)(1+\gamma)(1 - P_t L_l)\big) $, this is why we needed the assumption with a positive linear combination. The assumption gives us the following:
\begin{align}
    \label{eq:begin_recursion_1}
    \sum_l \alpha_l \bigg((1-u^l_t)(1+\gamma)(1 - P_t L_l)\bigg) \frac{d}{d u_{t-k}^i}\mathbb{E}\bigg[K_{t}^{l}\bigg] > \notag \\ \alpha_i \bigg((1-u^i_t)(1+\gamma)(1 - P_t L_i)\bigg) \frac{d}{d u_{t-k}^i}\mathbb{E}\bigg[K_{t}^{i}\bigg],
\end{align}
But now we note that the elements in the first sum above are all positive, and so
\begin{align}
    \label{eq:begin_recursion_2}
    \sum_l \alpha_l \mathbb{E}\bigg[\frac{ K_{t+1}^i K_{t+1}^l}{(1- X_t L_i)(1-X_t L_l)(1-u_t^i)(1+\gamma)}\sum_e \frac{\lambda_e \mu_e t}{(1+\lambda_e U_t)^2}\bigg] >\notag \\ \alpha_i \mathbb{E}\bigg[\frac{ (K_{t+1}^i)^2}{(1- X_t L_i)^2(1-u_t^i)(1+\gamma)}\sum_e \frac{\lambda_e \mu_e t}{(1+\lambda_e U_t)^2}\bigg],
\end{align}
This suffices to establish the desired inequality 
\begin{align}
     \frac{d}{d u_{t-k}^i} \sum_l \alpha_l \mathbb{E}\bigg[K_{t+1}^l \bigg] >  \frac{d}{d u_{t-k}^i} \alpha_i \mathbb{E}\bigg[K_{t+1}^i \bigg],
\end{align}
Which as a special case ($\alpha_l =1$ for all $l$) tells us that the gradient of the social welfare with respect to the mitigation actions of any agent at any point in the past is higher than the selfish gradient.
\end{proof}

\subsection{Induction Argument Re-stated}
\begin{theorem}[Social-mitigation]
    \label{social-benefit}
    For all $\lambda > 0$ , $\forall t, i, k,$
    \begin{equation}
        \frac{d}{d u_{t-k}^i}\mathbb{E}\left[\sum_j K_{t+1}^j \right] > \frac{d}{d u_{t-k}^i} \mathbb{E}\left[K_{t+1}^i \right].
    \end{equation}
\end{theorem}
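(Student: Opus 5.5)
The theorem is the special case $\alpha_l = 1$ for all $l$ of the two preceding lemmas. I would package it as a single induction on the lag $k \geq 0$, carrying the stronger weighted statement through the induction. The hypothesis $P(k)$ is: for every time $t$, every company $i$ and every vector of strictly positive weights $(\alpha_l)_l$,
\[
\frac{d}{d u_{t-k}^i}\sum_l \alpha_l \mathbb{E}\big[K_{t+1}^l\big] > \alpha_i \frac{d}{d u_{t-k}^i}\mathbb{E}\big[K_{t+1}^i\big].
\]
Dropping the weights would not work, because the inductive step produces coefficients that differ across companies.

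\textbf{Base case $k=0$.} This is exactly Lemma~\ref{lem:inequality_1}, in its weighted form (\ref{eq:inequality_1_general}). The proof has two parts. First, linearity of the derivative gives
\[
\frac{d}{du_t^i}\sum_l \alpha_l\mathbb{E}[K_{t+1}^l] - \alpha_i \frac{d}{du_t^i}\mathbb{E}[K_{t+1}^i] = \sum_{l\neq i}\alpha_l \frac{d}{du_t^i}\mathbb{E}[K_{t+1}^l].
\]
Second, each cross term is given by (\ref{eq:d_K_d_u_t_other}), which is the expectation of a product of nonnegative capitals with positive factors $(1-X_tL_l)^{-1}$, $(1-u_t^i)^{-1}$ and $\sum_e \lambda_e\mu_e t/(1+\lambda_eU_t)^2$. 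Strictness requires that some $l\neq i$ exists, i.e. $M\geq 2$, and that capitals are strictly positive with positive probability. It also uses $\lambda_e>0$ and $t\geq 1$, which is where the hypothesis $\lambda>0$ enters.

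\textbf{Inductive step.} Assume $P(k-1)$ for all $t$, $i$ and positive weights, and apply the recurrences (\ref{eq:recurrence_2}) and (\ref{eq:recurrence_3}). Each derivative $\frac{d}{du_{t-k}^i}\mathbb{E}[K_{t+1}^l]$ splits into two pieces:
\begin{itemize}
\item a direct climate term, the expectation of a nonnegative product;
\item a propagation term $c_l\,\frac{d}{du_{t-k}^i}\mathbb{E}[K_t^l]$, with $c_l = (1-u_t^l)(1+\gamma)(1-P_tL_l)$.
\end{itemize}
I would first check that $c_l>0$, which needs $u_t^l<1$ and $P_tL_l<1$. Then:
\begin{itemize}
\item For the direct terms, the $l=i$ summand is exactly the private one, and the remaining summands are nonnegative, strictly positive as in the base case.
\item For the propagation terms, the derivative with respect to $u_{t-k}^i$ of $K_t^l$ is the lag-$(k-1)$ quantity at time $t-1$. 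So $P(k-1)$ applies with weights $\alpha'_l=\alpha_l c_l>0$.
\end{itemize}
Adding the two inequalities gives $P(k)$. Setting all $\alpha_l=1$ then yields the theorem for every $t,i,k$.

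\textbf{Main obstacle.} The delicate point is justifying the recurrences themselves. In (\ref{eq:recurrence_1}) the expectation was split as a product using independence of $X_t$ and $K_t$, and that independence rests on Assumption~\ref{as:nomit} (baseline with no prior mitigation). I would state the theorem as holding at that baseline point, where the derivatives are evaluated. I would also verify that the direct term for $l\neq i$ involves $K_{t+1-k,\text{interim}}^i$ rather than $K_{t+1}^i$; since it is nonnegative either way, the sign argument is unaffected.

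A second subtlety is strictness across the induction. Only one strict inequality is needed per step, and the direct terms for $l\neq i$ supply it whenever $M\geq 2$. So even if the propagation inequality were only weak, the sum would still be strict.
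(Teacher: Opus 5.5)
Your proposal is correct and follows the paper's proof essentially step for step. Like the paper, you run an induction on $k$ that carries the weighted inequality with arbitrary positive $\alpha_l$, take the base case from Lemma~\ref{lem:inequality_1}, handle the inductive step through the recurrences with reweighted coefficients $\alpha'_l=\alpha_l(1-u_t^l)(1+\gamma)(1-P_tL_l)$, and finish by setting $\alpha_l=1$. Your extra side conditions are refinements the paper leaves implicit rather than a different route: $M\geq 2$ and $t\geq 1$ for strictness, $c_l>0$, independence only at the zero-mitigation point, the $K^i_{t+1-k,\text{interim}}$ indexing of the direct term, and needing only a weak inequality from the propagation term.
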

\begin{proof}
    In order to prove theorem \ref{social-benefit}, we need to prove the following equation for all $i,t,k$:
    \begin{align}
    \label{eq:induction_statement}
         \frac{d}{d u_{t-k}^i} \sum_l \alpha_l \mathbb{E}\bigg[K_{t+1}^l \bigg] >  \frac{d}{d u_{t-k}^i} \alpha_i \mathbb{E}\bigg[K_{t+1}^i \bigg],
    \end{align}
    we proceed by induction on $k$, using $k=0$ as base case:
    \paragraph{Induction base case $k=0$.}
    The base case is trivial from lemma \ref{lem:inequality_1}. 
    
    \paragraph{Induction recursion.}
    For the induction, we assume that equation \ref{eq:induction_statement} holds for all $k' < k$, and show that this implies that the equation is true for $k$. This is what equations \ref{eq:begin_recursion_1} through \ref{eq:begin_recursion_2} are showing. 
    \\\\
    By setting $\alpha_l = 1$ in the equation, we finish the proof.
\end{proof}

\subsection{All Selfish Gradients are Negative at Low Enough $\lambda$}
\begin{theorem}[Low mitigation effectiveness]
    \label{low-mitigation}
    There exists $\lambda_{\text{low}}> 0$ such that for all $\lambda < \lambda_{\text{low}}$, we have:
    \begin{equation}
        \frac{d}{d u_{t-k}^i} \mathbb{E}\big[K_{t+1}^i \big] < 0 \quad \textnormal{and} \quad \frac{d}{d u_{t-k}^i} \sum_j \mathbb{E}\big[K_{t+1}^j \big] < 0 \quad \forall i,t,k.
    \end{equation}
\end{theorem}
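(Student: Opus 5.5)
The plan is a continuity argument at $\lambda=0$, using the recurrences \eqref{eq:recurrence_2}--\eqref{eq:recurrence_3} and the closed form \eqref{eq:d_K_d_u_t_2}. Write $\lambda_e=\alpha\tilde\lambda_e$ and treat every derivative $\frac{d}{du^i_{t-k}}\mathbb{E}[K^l_{t+1}]$ as a function of the single scalar $\alpha\ge 0$. Under Assumption~\ref{as:nomit} we have $U=0$, so every climate-dependent term carries a factor
\[
\sum_e \frac{\lambda_e\mu_e t}{(1+\lambda_e U_t)^2}=\alpha\sum_e\tilde\lambda_e\mu_e t .
\]
This factor is linear in $\alpha$ and vanishes at $\alpha=0$. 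The remaining quantities ($\mathbb{E}[K]$, the $(1-X_tL)$ factors, $P_t$) are polynomials in finitely many Bernoulli probabilities that are continuous in $\alpha$. Since the horizon is finite ($t\le T=100$), there are only finitely many pairs $(i,t,k)$, so it suffices to show each derivative is strictly negative at $\alpha=0$ and then take $\lambda_{\text{low}}$ as the minimum of the finitely many continuity radii.

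\textbf{Step 1: values at $\alpha=0$.} Unroll \eqref{eq:recurrence_2}. At $\alpha=0$ the climate term drops out, leaving
\[
\frac{d}{du^i_{t-k}}\mathbb{E}[K^i_{t+1}]
=\prod_{s=t-k+1}^{t}\big((1-u^i_s)(1+\gamma)(1-P_sL_i)\big)\cdot\frac{d}{du^i_{t-k}}\mathbb{E}[K^i_{t-k+1}] .
\]
The base factor is $-\mathbb{E}[K^i_{t-k+1}]/(1-u^i_{t-k})<0$, by \eqref{eq:d_K_d_u_t_2} with $\lambda=0$. The product is strictly positive provided $u<1$, $1+\gamma>0$ and $P_sL_i<1$; I would state the last as a standing condition $L_i<1/3$ or $P_sL_i<1$. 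Together with $K^i_0>0$, this makes the private derivative strictly negative at $\alpha=0$.

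For the social derivative, at $\alpha=0$ the cross terms in \eqref{eq:recurrence_3} vanish. Mitigation by $i$ then cannot affect $K^l$ for $l\neq i$: with static portfolios (Assumptions~\ref{as:static}, \ref{as:single}) there is no other coupling, and the recursion for $\frac{d}{du^i_{t-k}}\mathbb{E}[K^l]$ starts from $0$. The social derivative therefore equals the private one at $\alpha=0$, and is also strictly negative.

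\textbf{Step 2: continuity.} Each derivative is a finite composition, through the recurrence, of terms continuous in $\alpha$. Hence it is continuous on $[0,\epsilon)$, so strict negativity persists for $\alpha<\alpha_{(i,t,k)}$. Set $\lambda_{\text{low}}=\min_e\tilde\lambda_e\cdot\min_{(i,t,k)}\alpha_{(i,t,k)}>0$.

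The main obstacle is making Step 2 honest. The recurrence expressions contain expectations of $K^2/(1-X_tL)^2$, and the dependence of $\mathbb{E}[K_t]$ on $\alpha$ runs through $P_t$. I would therefore bound everything uniformly rather than invoke continuity abstractly. Let $B$ bound $K$ over the finite horizon, say $K^i_0(1+\gamma)^T$, and let $c>0$ lower-bound $1-X_tL_i$. Each climate term is then at most $\alpha\,C_{t}$ for an explicit constant $C_t$. The positive contributions after unrolling are $O(\alpha)$, while the negative base term is bounded away from zero, by at least $\mathbb{E}[K^i_{t-k+1}]\ge c^{T}K^i_0>0$ times the positive product. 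This yields an explicit $\lambda_{\text{low}}$ and avoids subtle issues with the Bernoulli approximation used in Lemma~\ref{lem:inequality_1}.
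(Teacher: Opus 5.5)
Your proposal is correct and follows the paper's route: evaluate the recurrence \eqref{eq:recurrence_2} at $\lambda=0$, where the climate term drops out, so each selfish derivative reduces to a strictly positive product of factors times $-\mathbb{E}[K^i_{t-k+1}]/(1-u^i_{t-k})<0$. Your version is more complete than the paper's proof, which only shows negativity of the selfish gradients at $\lambda=0$ and then asserts existence. You additionally treat the social gradient (the cross terms in \eqref{eq:recurrence_3} vanish at $\lambda=0$, so it equals the private one), make the extension to an interval $(0,\lambda_{\text{low}})$ explicit by continuity over the finitely many $(i,t,k)$ of the finite horizon, and state the positivity condition $P_sL_i<1$ that the paper uses tacitly.
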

\begin{proof}
    We proceed by simple recurrence to show that all gradients are negative at $\lambda_e = 0$.  We note that equation \ref{eq:d_K_d_u_t_2} at $\lambda_e = 0$ reduces to
    \begin{align}
        \frac{d}{d u_{t}^i} \mathbb{E}\bigg[K_{t+1}^i \bigg] = -\frac{\mathbb{E}[K_{t+1}^i]}{1-u_t^i},
    \end{align}
    thus this gradient is negative. Using equation \ref{eq:recurrence_2}, we see that the first term vanished in the $\lambda_e = 0$, giving us a term which only depends on $\frac{d}{d u_{t-k}^i}\mathbb{E}\bigg[K_{t}^{i}\bigg]$, which we know is negative:
    \begin{align}
        \frac{d}{d u_{t-k}^i} \mathbb{E}\bigg[K_{t+1}^i \bigg] = \bigg((1-u^i_t)(1+\gamma)(1 - P_t L_i)\bigg) \frac{d}{d u_{t-k}^i}\mathbb{E}\bigg[K_{t}^{i}\bigg] < 0.
    \end{align}
    Thus there exists a $\lambda_e$ such that all the selfish gradients are negative, completing the proof.
\end{proof}

\subsection{Some Selfish Gradients are Positive}
\begin{theorem}[Self-interested mitigation]
    \label{self-interested-mitigation}
    There exists $\lambda > 0$ such that:
    \begin{equation}
        \forall t, i, \exists k, \quad \frac{d}{d u_{t-k}^i} \mathbb{E}\big[K_{t+1}^i \big] > 0.
    \end{equation}
\end{theorem}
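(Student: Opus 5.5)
The plan is to exhibit a single suitable $\lambda$ by looking only at the simplest derivative, $k=0$, using the closed form \eqref{eq:d_K_d_u_t_2} from the proof of Lemma~\ref{lem:inequality_1}. Under Assumption~\ref{as:nomit} we evaluate at $u^i_s=0$ for $s\le t$, so $U_t=0$. The denominator $(1+\lambda_e U_t)^2$ then equals $1$, and \eqref{eq:d_K_d_u_t_2} becomes
\begin{equation*}
\frac{d}{d u_{t}^i}\mathbb{E}\big[K_{t+1}^i\big] = -\mathbb{E}[K_{t+1}^i] + \frac{t}{1+\gamma}\,\mathbb{E}\bigg[\frac{(K_{t+1}^i)^2}{(1-X_tL_i)^2}\bigg]\sum_e \lambda_e\mu_e ,
\end{equation*}
with $\lambda_e=\alpha\tilde\lambda_e$. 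The key observation is that, at $U_t=0$, neither the negative term nor the expectation in the positive term depends on $\lambda$. Indeed, with no past mitigation, $P^e_s=\mu_e s+P^e_0$ for all $s\le t$, so the law of $(K^i_{t+1},X_t)$ is independent of $\alpha$. The derivative is therefore an affine function of $\alpha$ with slope $\frac{t}{1+\gamma}\,\mathbb{E}\big[(K_{t+1}^i)^2/(1-X_tL_i)^2\big]\sum_e\tilde\lambda_e\mu_e$.

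The steps are as follows.
\begin{enumerate}
\item Verify that this slope is strictly positive for every $t\ge1$. This needs $\mu_e,\tilde\lambda_e>0$, $L_i<1/3$ (so that $1-X_tL_i>0$, as needed for capital to stay nonnegative), and $K^i_{t+1}>0$ with positive probability. The last condition follows from $K^i_0>0$ and the event $X_s=0$, which has positive probability.
\item Bound the negative term using $\mathbb{E}[K^i_{t+1}]\le (1+\gamma)^{t+1}K^i_0$.
\item Bound the positive term below by restricting to the event $\{X_s=0\ \forall s\le t\}$. On that event $K^i_{t+1}=(1+\gamma)^{t+1}K^i_0$, and the event has probability $\prod_{s\le t}\prod_e(1-P^e_s)>0$, provided the $P^e_s$ stay below $1$ over the finite horizon $T=100$.
\item Conclude that for each $t$ there is a threshold $\alpha_t$ beyond which the $k=0$ derivative is positive.
\item Take $\alpha^\ast>\max_{1\le t\le T}\alpha_t$. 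Since the horizon is finite this maximum is finite, and a single $\lambda=\alpha^\ast\tilde\lambda$ works for all $t$ and $i$ at once, with witness $k=0$.
\end{enumerate}

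The main obstacle is the edge case $t=0$, where the factor $\mu_e t$ kills the benefit term and the derivative equals $-\mathbb{E}[K^i_1]<0$ for every $\lambda$. I would handle this by reading the quantifier over $t$ as $t\ge1$, which is the regime where a positive $k$ with $0<k<t$ is meaningful anyway. Alternatively, for $t\ge1$ one can use the recurrence \eqref{eq:recurrence_2}. Its first term is likewise affine in $\alpha$ with positive slope at $U_t=0$, while the propagated term is at most linear in $\alpha$ through earlier derivatives, so the same large-$\alpha$ argument applies to any $k$.

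A secondary subtlety is uniformity across companies $i$ with different $L_i$ and $K^i_0$. Since $M$ is finite, I take the maximum of the thresholds over $i$ as well. Finally, I would note that the Bernoulli approximation used in Lemma~\ref{lem:inequality_1} is inherited here, so positivity is established for the approximated expression, consistent with the rest of the appendix.
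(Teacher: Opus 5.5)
Your proposal is correct and takes essentially the same route as the paper. Both use the witness $k=0$, evaluate \eqref{eq:d_K_d_u_t_2} at the no-mitigation point $U_t=0$ so that the derivative becomes affine in $\lambda$ with positive slope, and read off a sign-flip threshold. Your additions are refinements that the paper's one-line argument leaves implicit or misses: checking that the law of $(K^i_{t+1},X_t)$ does not depend on $\lambda$ at $U_t=0$, taking a finite maximum of the thresholds over $t\le T$ and $i\le M$ to get a single $\lambda$, and noticing that at $t=0$ the factor $\mu_e t$ kills the benefit term, so the statement only holds for $t\ge 1$.
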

\begin{proof}
    Again from equation \ref{eq:d_K_d_u_t_2}:
    \begin{align}
        \frac{d}{d u_{t}^i} \mathbb{E}\bigg[K_{t+1}^i \bigg] = -\frac{\mathbb{E}[K_{t+1}^i]}{1-u_t^i} + \mathbb{E}\bigg[\frac{(K_{t+1}^i)^2}{(1- X_t L_i)^2(1-u_t^i)(1+\gamma)} \sum_e \frac{\lambda_e \mu_e t}{(1+\lambda_e U_t)^2}\bigg].
    \end{align}
    We evaluate the gradient at the $U_t = 0$ policy, hence we can isolate the $\lambda_e$ where the sign flips:
    \begin{align}
        \lambda_e^{\text{sign flip}} \times \mathbb{E}\bigg[\frac{(K_{t+1}^i)^2}{(1- X_t L_i)^2(1-u_t^i)(1+\gamma)} \sum_e \mu_e t\bigg] = \frac{\mathbb{E}[K_{t+1}^i]}{1-u_t^i},
    \end{align}
    which implies that at $\lambda$ sufficiently high, at least some selfish gradients become positive, and the theorem is proved.
\end{proof}

\subsection{Main Result}
We can now state the main social dilemma theorem, which says that there is a $\lambda$ where all the individual gradients are strictly negative, but at least some social gradients are positive.
\begin{corollary}[InvestESG is a Social Dilemma]
    \label{social-dilemma}
    There exists some mitigation effectiveness parameter $\lambda > 0$ such that
    \begin{equation}
        \forall t,i,k, \quad\frac{d}{d u_{t-k}^i} \mathbb{E}\big[K_{t+1}^i \big] < 0 \quad \textnormal{and} \quad \forall t \;\exists i,k \quad \frac{d}{d u_{t-k}^i} \sum_j\mathbb{E}\big[K_{t+1}^j \big] >0.
    \end{equation}
\end{corollary}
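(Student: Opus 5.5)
The plan is an intermediate-value argument in $\lambda$ (equivalently $\alpha$), combining Theorems~\ref{low-mitigation}, \ref{self-interested-mitigation} and \ref{social-benefit}.

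\textbf{Continuity and the critical value.} Each derivative $g^i_{t,k}(\lambda) := \frac{d}{d u_{t-k}^i}\mathbb{E}[K_{t+1}^i]$ is a continuous function of $\lambda$. I will check this by inspecting the recurrence (\ref{eq:recurrence_2}) and the base formula (\ref{eq:d_K_d_u_t_2}). They are built from finite sums and products of the terms $\lambda_e\mu_e t/(1+\lambda_e U_t)^2$ and expectations over the finitely many values of $X_t$. Evaluated at the no-mitigation point, with $U_t$ fixed, these are continuous, in fact rational, in $\lambda$.

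Over the finite horizon, the index set of $(t,i,k)$ is finite. Define $M(\lambda):=\max_{t,i,k} g^i_{t,k}(\lambda)$, which is continuous as a finite maximum of continuous functions. By Theorem~\ref{low-mitigation}, $M(\lambda)<0$ for $\lambda<\lambda_{\text{low}}$. By Theorem~\ref{self-interested-mitigation}, $M(\lambda_1)>0$ for some $\lambda_1$. Set $\lambda_{\text{critical}}:=\inf\{\lambda : M(\lambda)\ge 0\}$. Continuity gives $M(\lambda_{\text{critical}})=0$ and $M<0$ on $(0,\lambda_{\text{critical}})$.

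\textbf{Choice of $\lambda$.} At $\lambda_{\text{critical}}$ some selfish derivative is exactly $0$, say $g^{i^*}_{t^*,k^*}=0$. Theorem~\ref{social-benefit} then makes the corresponding social derivative strictly positive. The corollary needs a $\lambda$ with every selfish gradient strictly negative, so I will take $\lambda$ slightly below $\lambda_{\text{critical}}$.

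The social derivative $S^{i^*}_{t^*,k^*}(\lambda)$ is also continuous and is strictly positive at $\lambda_{\text{critical}}$. It therefore stays positive on a neighbourhood $(\lambda_{\text{critical}}-\delta,\lambda_{\text{critical}}]$. Any $\lambda$ in that neighbourhood with $\lambda<\lambda_{\text{critical}}$ has all selfish gradients negative and at least one social gradient positive.

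\textbf{The quantifier over $t$.} The statement asks for a positive social gradient for every $t$, not just one. The plan handles this by running the argument for each $t$ separately and intersecting. For each fixed $t$, define $M_t(\lambda):=\max_{i,k}g^i_{t,k}$ and its first zero $\lambda_t$. The global $\lambda_{\text{critical}}=\min_t\lambda_t$ is only guaranteed to produce a near-zero selfish derivative for one $t$. For the other times I need the social derivative to already be positive below $\lambda_{\text{critical}}$.

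\textbf{Main obstacle.} The obstacle is exactly this uniform-in-$t$ positivity. Theorem~\ref{social-benefit} only gives a strict gap between the social and selfish derivatives, not a quantitative one. I would first try to lower-bound the gap $S-g$ using the cross terms in (\ref{eq:recurrence_3}). Each is at least a positive constant times $\lambda\sum_e\mu_e t$, uniformly over the finite horizon.

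I would then show that for each $t$ there is a $k$ whose selfish derivative at $\lambda_{\text{critical}}$ is within that gap of zero. If this fails, I would fall back on the weaker reading of the outline: it suffices to find one $t$ (the one attaining $\lambda_{\text{critical}}$). In that case I would state the corollary for the existential $t$ produced by the argument above.
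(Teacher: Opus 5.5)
Your core argument is the same as the paper's: find the first $\lambda$ at which some selfish derivative reaches zero, apply Theorem~\ref{social-benefit} there to get a strictly positive social derivative, and step slightly to the left by continuity. Your version is in fact tighter. Defining $\lambda_{\text{critical}}$ as the first zero of the finite maximum $M(\lambda)$ is exactly what guarantees that \emph{all} selfish derivatives are negative just below it. The paper instead applies the intermediate value theorem to a single derivative, then asserts the left-neighbourhood claim, and even includes $\lambda_{\text{critical}}$ itself in that neighbourhood. Continuity is also simpler than you suggest. At the no-mitigation point the event probabilities $\mu_e s+P_0^e$ do not depend on $\lambda$, so every derivative is affine in $\alpha$, of the form $a+\alpha b$ with $a<0$.

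Do not pursue the uniform-in-$t$ repair: it cannot succeed, because the clause $\forall t$ is false as stated. At $t=0$ the factor $\mu_e t$ vanishes, so $P_0^e$ does not depend on $U_0$. The only available derivative is then $\frac{d}{du_0^i}\sum_j\mathbb{E}[K_1^j]=-\mathbb{E}[K_1^i]/(1-u_0^i)<0$. This holds for every $\lambda$, and the social and selfish derivatives coincide there. Your proposed lower bound $\text{const}\cdot\lambda\sum_e\mu_e t$ on the gap is zero at $t=0$ for the same reason. For small $t\ge 1$ the $\lambda$-slope still carries the factor $t$. So the first crossing is typically set by late time-steps, and early social derivatives need not have turned positive yet.

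The $t=0$ computation also shows your use of strictness at $(t^*,i^*,k^*)$ is legitimate. The zero cannot occur at $t=0$, so $t^*\ge 1$, and strictness then only needs at least two companies. The paper's own proof ends with ``there exists a social gradient which is positive''. That is, it establishes only the existential-$t$ version, which is exactly your fallback. State that version and your proof is complete.
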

\begin{proof}
    From the intermediate value theorem and theorems \ref{low-mitigation} and \ref{self-interested-mitigation}, we conclude that there exists $\lambda_{\text{critical}}$ for which $\exists t,i,k\frac{d}{d u_{t-k}^i} \mathbb{E}\big[K_{t+1}^i \big] = 0 $, and from continuity of $\frac{d}{d u_{t-k}^i} \mathbb{E}\big[K_{t+1}^i \big]$ with respect to $\lambda$ we infer that there exists $\epsilon > 0$ such that for $\lambda_{\text{critical}} \geq \lambda > \lambda_{\text{critical}} - \epsilon$ we have $\forall t,i,k\frac{d}{d u_{t-k}^i} \mathbb{E}\big[K_{t+1}^i \big] < 0 $. Now theorem \ref{social-benefit} applied at $\lambda_{\text{critical}}$ tells us $\exists t, i, k \quad \frac{d}{d u_{t-k}^i} \mathbb{E}\big[\sum_j K_{t+1}^j \big] > 0$, and again by continuity this quantity remains positive at some smaller $\lambda_{\text{critical}} - \epsilon^*$. Thus $\lambda = \lambda_{\text{critical}} - \min(\epsilon, \epsilon^*)$ ensures that all individual gradients are negative while there exists a social gradient which is positive.
\end{proof}

\newpage
\section{On the Cooperative Bias of Advantage Alignment}
\label{app:COOPERATIVE_AA}
Advantage Alignment has a bias towards promoting cooperation between agents when the advantages are estimated using generalized advantage estimation (GAE) \citep{schulman2018highdimensional}. The main insight is that TD-1 errors have a positive bias for on-policy trajectories because the critic lags behind the actor; advantages are systematically underestimated, producing a cooperation bias in the algorithm. The advantage alignment updated advantages are:
\begin{equation}
    \tag{\ref{eq:INTEGRATED_ADVANTAGE}}
    A^{*, i}(s_t, \mathbf{a}_t) = \left(A^i(s_t, \mathbf{a}_t) + \beta \gamma \cdot \sum_{j \neq i}\left(\sum_{k<t} \gamma^{t-k} A^{i}(s_k,\mathbf{a}_k) \right)A^j(s_t, \mathbf{a}_t)\right).
\end{equation}
The term $\sum_{k<t} \gamma^{t-k} A^{i}(s_k,\mathbf{a}_k)$ acts as a multiplicative factor for the opponent advantages. If we let $b^i = \mathbb{E}_{\tau \sim \text{Pr}_{\mu}^{\pi^i, \pi^{-i}}} \left[\sum_{k<t} \gamma^{t-k} A^{i}(s_k,\mathbf{a}_k)\right]$, we can split the expression into its biased and unbiased parts:
\begin{align}
    \label{eq:INTEGRATED_ADVANTAGE_SUMMED_REWARD_1}
    A^{*, i}_{\text{biased part}}(s_t, \mathbf{a}_t) &= \left(A^i(s_t, \mathbf{a}_t) + b^i\times \beta\gamma \sum_{j \neq i} A^j(s_t, \mathbf{a}_t)\right),\\
    \label{eq:INTEGRATED_ADVANTAGE_SUMMED_REWARD_2}
    A^{*, i}_{\text{unbiased part}}(s_t, \mathbf{a}_t) &=\beta \gamma \cdot \sum_{j \neq i}\left(-b^i + \sum_{k<t} \gamma^{t-k} A^{i}(s_k,\mathbf{a}_k) \right)A^j(s_t, \mathbf{a}_t).
\end{align}
It is this biased part of the new advantages which produce the cooperation bias. In the extreme case where $b\times \beta\gamma = 1$, the learning process collapses to simply doing summed-reward training.
\\\\
For advantage functions $A^j(s_t, \mathbf{a})$ which perfectly evaluate the behavior of the policies $\pi_j$, the terms above have $b^i = 0$, yielding no collaborative bias for advantage alignment. However, Generalized Advantage Estimation computes the advantages as:
\begin{align}
    \delta_{t}^V &:= r_t + \gamma V_\phi(s_{t+1}) - V_\phi(s_t),\\
    \hat{A}^{\text{GAE}(\lambda, \gamma)}_t &:= \sum_{l=0}^{\infty} (\gamma\lambda)^t \delta_{l+t}^V,
\end{align}
Where the value networks $V_{\phi}$ are trained to evaluate policies $\pi_\theta$ which are themselves learning and improving. As $\pi$ improves, the value networks perpetually lag behind, computing the value of slightly time-delayed policies. This implies that $V_\phi(s_t)$ will on average underestimate the next-step rewards see \cite{DBLP:conf/colt/KearnsS00}, and this will give us a positive bias to the TD-errors seen on-policy:
\begin{align}
     \mathbb{E}_{\tau \sim \text{Pr}_{\mu}^{\pi^i, \pi^{-i}}}[\delta^V_t] &> 0\\
     \implies \mathbb{E}_{\tau \sim \text{Pr}_{\mu}^{\pi^i, \pi^{-i}}}\left[\hat{A}^{\text{GAE}(\lambda, \gamma)}_t\right] &> 0\\
     \implies \hat{b}^i = \mathbb{E}_{\tau \sim \text{Pr}_{\mu}^{\pi^i, \pi^{-i}}} \left[\sum_{k<t} \gamma^{t-k} \hat{A}^{i, \text{GAE}(\lambda, \gamma)}(s_k,\mathbf{a}_k)\right] &> 0.
\end{align}
This bias towards cooperation only manifests at the beginning of training, before the critic has fully caught up with the actor. At convergence of the actor, the critic does catch up, and the TD-errors become unbiased, removing the cooperative bias.

\newpage
\section{Experimental Details}
\label{app:EXPERIMENTS}
Throughout our experiments, we trained using the Adam optimizer \citep{kingma2017adammethodstochasticoptimization} and trained for $70$ million environment steps (approximately $20000$ gradient steps), as anything beyond that resulted in lower performance. Our experiments ran in approximately 4 hours on a NVIDIA L40s gpu.

\subsection{PPO Policy Parameterization and Hyper-Parameters}
As in the InvestESG paper \citep{hou2025investesgmultiagentreinforcementlearning}, we use a two-layer fully-connected MLP with hidden size of $256$ neurons and ReLU activations to parameterize the mean of a multivariate normal distribution. We use an additional parameter to parameterize the log standard deviation. For the critic, we use the same architecture (two-layer MLP, ReLU activations, with hidden size $256$) to estimate the value directly. We swept over hyper-parameters to find the best for maximizing market total wealth. For all PPO experiments we used:
\begin{table}[h]
\centering
\small
\begin{tabular}{@{}ll@{}}
\toprule
\textbf{Hyperparameter} & \textbf{Value} \\
\midrule
Algorithm & IPPO \\
Total Environment Steps & 70M \\
Number of Environments & 64 \\
Episode Length & 100 \\
Discount Factor ($\gamma$) & 0.99 \\
GAE $\lambda$ & 0.95 \\
Policy Learning Rate & 1e-4 \\
Value Function Learning Rate & 1e-4 \\
Entropy Coefficient & 0.05 \\
Clip Ratio ($\epsilon$) & 0.2 \\
Value Function Clip Range & 10 \\
Number of PPO Epochs & 4 \\
Number of Minibatches & 20 \\
Gradient Clipping & 10.0 \\
MLP Hidden Size & 64 \\
MLP Hidden Layers & 2 \\
Self-Play & False \\
Fixed Random Seed & False \\
\bottomrule
\end{tabular}
\vspace{0.4em}
\caption{Key PPO hyperparameters used for training in the InvestESG environment.}
\label{tab:ppo_hyperparams}
\end{table}

In the original InvestESG paper, the stochasticity of the environment is fixed with a single seed. We ran our experiments in the more challenging stochastic setting, which more accurately reflects some of the nuances of the climate problem.

\newpage
\subsection{Advantage Alignment Policy Parameterization and Hyper-Parameters}
As in theprevious section, we use a two-layer fully-connected MLP with hidden size of $256$ neurons and ReLU activations to parameterize the mean of a multivariate normal distribution. We use an additional parameter to parameterize the log standard deviation. For the critic, we use the same architecture (two-layer MLP, ReLU activations, with hidden size $256$) to estimate the value directly. We swept over Advantage-Alignment specific hyper-parameters (keeping the PPO hyper-parameters fixed) to find the best for maximizing market total wealth. For all Advantage Alignment experiments we used:
\begin{table}[h]
\centering
\small
\begin{tabular}{@{}ll@{}}
\toprule
\textbf{Hyperparameter} & \textbf{Value} \\
\midrule
Algorithm & AdAlign \\
Total Environment Steps & 70M \\
Number of Environments & 64 \\
Episode Length & 100 \\
Discount Factor ($\gamma$) & 0.99 \\
GAE $\lambda$ & 0.95 \\
Policy Learning Rate & 1e-4 \\
Value Function Learning Rate & 1e-4 \\
Entropy Coefficient & 0.05 \\
Clip Ratio ($\epsilon$) & 0.2 \\
Value Function Clip Range & 10 \\
Number of PPO Epochs & 1 \\
Number of Minibatches & 20 \\
Gradient Clipping & 10.0 \\
Use RNN & False \\
MLP Hidden Layers & 2 \\
Self-Play & True \\
Advantage Alignment $\beta$ & 0.2 \\
Advantage Alignment $\gamma$ & 0.9 \\
Fixed Random Seed & False \\
\bottomrule
\end{tabular}
\vspace{0.4em}
\caption{Key Advantage Alignment hyperparameters used for training in the InvestESG environment.}
\label{tab:aa_hyperparams}
\end{table}

As mentioned before, using self-play greatly improves the stability of Advantage Alignment runs across seeds, therefore we use it for all Advantage Alignment experiments.

\newpage
\subsection{Baseline Scope and Rationale}
\label{app:baseline_scope}

We focus on methods that (i) handle continuous actions and high-dimensional observations, (ii) train under realistic compute budgets, and (iii) have been used in practice. This leads us to IPPO and MAPPO as the main baselines, and excludes several prior opponent–shaping (OS) methods for the full InvestESG experiments.

IPPO exposes non-stationarity and credit-assignment challenges with decentralized critics and is a strong, commonly used baseline in general-sum MARL. MAPPO follows CTDE: actors use local observations at train/test time while a centralized critic conditions on global context during training to reduce variance and improve credit assignment. In our setting, MAPPO offers a scalable, fair comparator that addresses the pathologies of IPPO without requiring execution-time communication.

\begin{center}
\small
\begin{tabular}{@{}lccc@{}}
\toprule
Method & Continuous actions & Practical runtime & Used at scale \\
\midrule
LOLA   & $\checkmark$ & $\checkmark$ & $\times$ \\
M-FOS  & $\checkmark$ & $\times$ & $\times$ \\
BRS    & $\checkmark$ & $\times$ & $\times$ \\
LOQA   & $\times$ & $\checkmark$ & $\checkmark$ \\
AdAlign & $\checkmark$ & $\checkmark$ & $\checkmark$ \\
\bottomrule
\end{tabular}
\end{center}
 
LOLA requires higher-order differentiation and has been found suboptimal even on simple discrete games (see LOQA \cite{aghajohari2024loqa}), and it is not designed for continuous control.  
M-FOS has similar limitations to LOLA with reported underperformance on small discrete settings.  
BRS is computationally prohibitive; reports indicate on the order of $\sim$48 A100 GPU-hours per seed on Coin Game, making InvestESG-scale sweeps impractical \cite{aghajohari2024best}.  
LOQA relies on normalized action probabilities, which makes it incompatible with continuous-action parameterizations.  
Advantage Alignment is policy-gradient based and action-agnostic, and has been deployed in higher-dimensional settings where prior OS methods are typically not compared for similar scalability reasons.

\newpage
\section{Additional Figures}

\subsection{Ablation over the $\alpha$ parameter}
\label{app:ALPHA_ABLATION}

\begin{figure}[h]
  \centering
  \begin{subfigure}{1\linewidth}
    \centering
    \includegraphics[width=\linewidth]{figures/lambda_1_histograms.pdf}
  \end{subfigure}

  \vspace{2mm}

  \begin{subfigure}{1\linewidth}
    \centering
    \includegraphics[width=\linewidth]{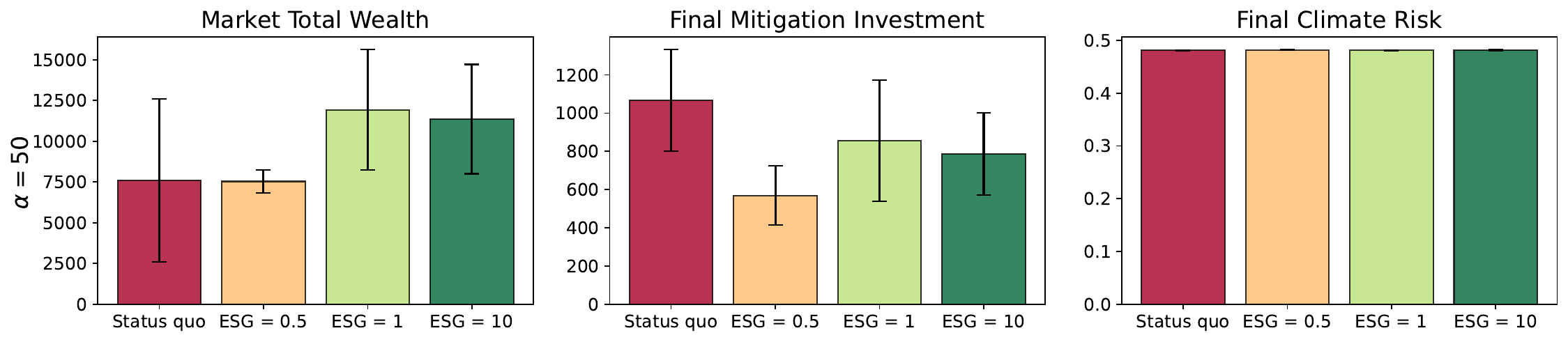}
  \end{subfigure}

  \vspace{2mm}

  \begin{subfigure}{1\linewidth}
    \centering
    \includegraphics[width=\linewidth]{figures/lambda_70_histograms.pdf}
  \end{subfigure}

  \vspace{2mm}

  \begin{subfigure}{1\linewidth}
    \centering
    \includegraphics[width=\linewidth]{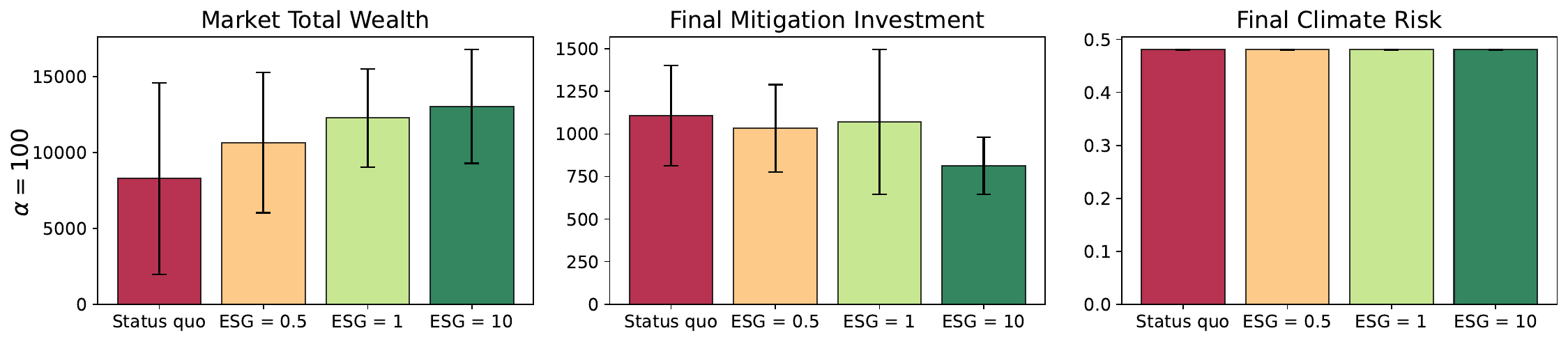}
  \end{subfigure}
  
  \caption{\small Comparison of final environment metrics at different  $\alpha$ (introduced in equation \ref{eq:CLIMATE_RISK}) values, for 10 seeds of PPO agents. We try values of $\{1, 50, 70, 100\}$ for the parameter $\alpha$. The choice of $70$ is the most sensible one, as there are clear differences between all policies with different ESG incentives. The result, albeit similar, is less apparent with a choice of $\alpha=100$.}
  \label{fig:ESG_SCORE_COMPARISSONS}
\end{figure}

\newpage
\subsection{Additional Results in the Default InvestESG Configuration}
\begin{figure}[h]
    \centering
    \vspace{-5mm}
    \includegraphics[width=0.6\linewidth]{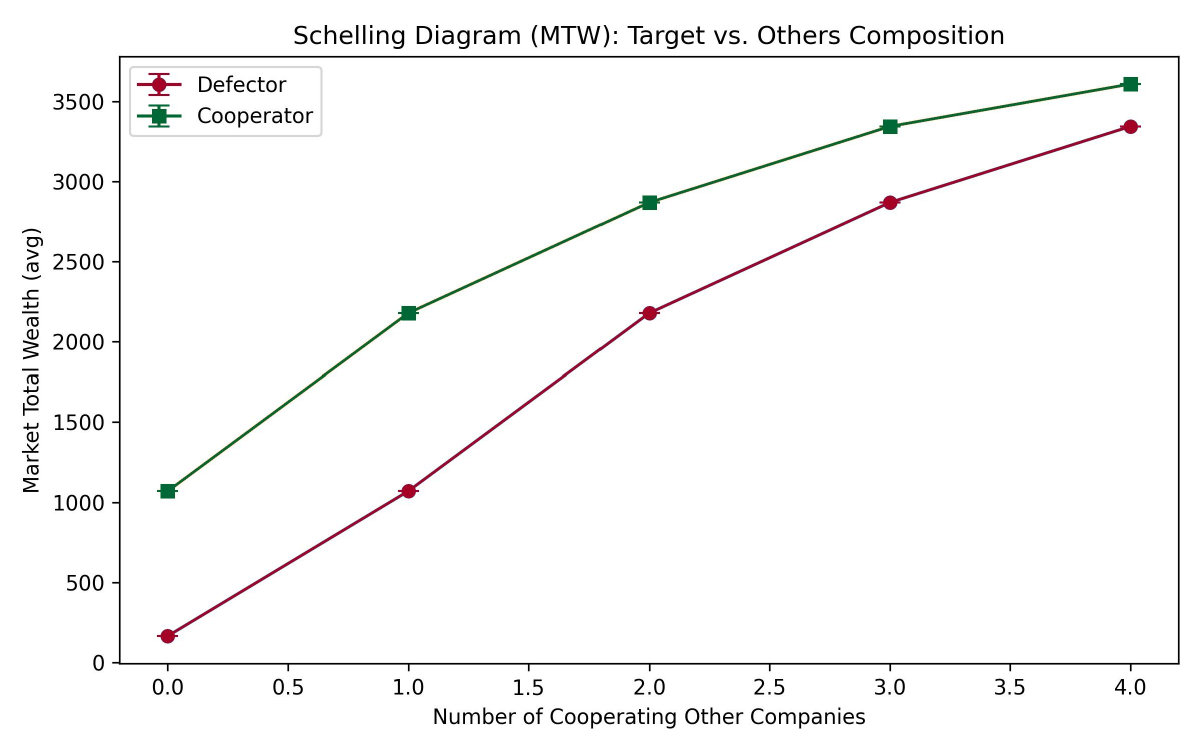}
    \caption{\small Schelling diagram looking at the market total wealth with a cooperator (green) or a defector (red) varying the number of other cooperating policies on the x-axis. Cooperation here is defined as in the original InvestESG paper, where a company spends $0.5\%$ of their capital on mitigation.}
    \label{fig:DEFAULT_MTW}
\end{figure}
Figure \ref{fig:DEFAULT_MTW} empirically shows that the cooperative action profile in the original InvestESG paper ($0.5\%$ of capital spent on mitigation) is not well calibrated. This is due to the fact that the market total wealth achieved by the cooperative action profile ($\approx3500$) is lower than the one achieved by PPO agents ($\approx4000$) in Figure \ref{fig:ESG_SCORE_COMPARISSONS}. This contradicts both definition \ref{def:SOCIAL_DILEMMA} and the intuitive construction that we provide of a social dilemma using \emph{the price of anarchy} in definition \ref{def:PoA}.
\newpage

\subsection{Interpreting Neural Network Policies}

\vspace{0.5cm}

\begin{figure}[h]
    \centering
    \vspace{-5mm}
    \includegraphics[width=1\linewidth]{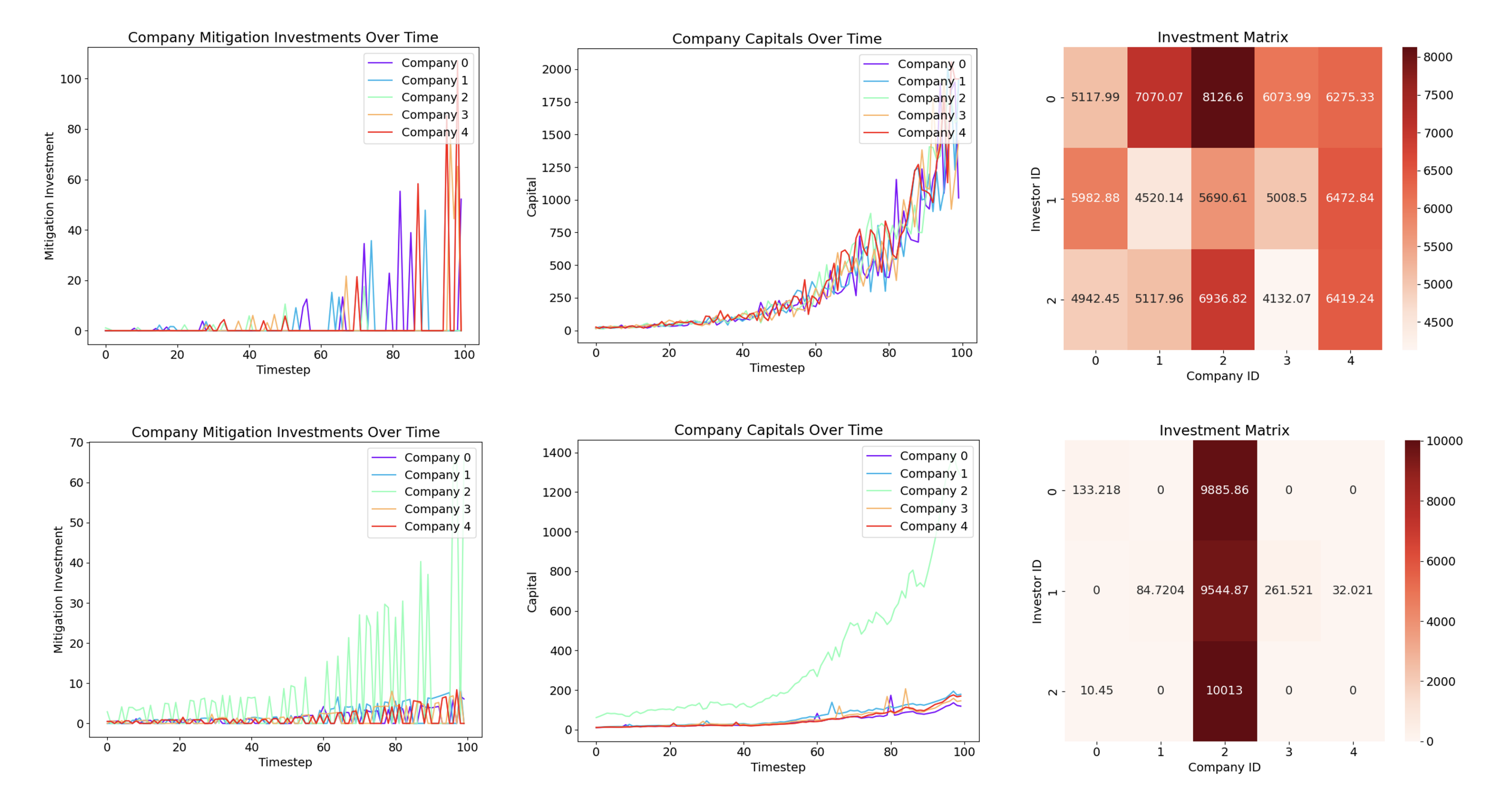}
    \caption{\small Policy dynamics in the ESG = 0 social dilemma. Top: Advantage Alignment (AA); bottom: PPO baseline. (i) Mitigation over time: AA invests only at climate-risk peaks and then returns to zero, whereas PPO agents invest erratically throughout. (ii) Company capital: AA keeps wealth roughly uniform across firms, while PPO gradually concentrates it. (iii) Final investment matrix: AA yields an almost symmetric allocation, whereas PPO shows fragmented, concentrated investments.}
    \label{fig:POLICY_INTERPRETATION}
\end{figure}
\newpage
\end{document}